\newtheorem{cor}{Corollary}
\newtheorem{defn}{Definition}
\newtheorem{lemma}{Lemma}
\newtheorem{thm}{Theorem}
\DeclareMathSymbol{\shortminus}{\mathbin}{AMSa}{"39}
\newcommand\blank[1]{\rule[-.2ex]{#1}{.4pt}}
\def\removal{y\ominus\hat y_\shortminus}
\def\removalC{y\ominus C}
\def\addition{\hat y\ominus\hat y_+}
\def\additionC{\hat y\ominus C}
\definecolor{officegreen}{rgb}{0.0, 0.5, 0.0}
\title{Efficient Connectivity-Preserving Instance Segmentation \\ with Supervoxel-Based Loss Function}
\author{
    Anna Grim, Jayaram Chandrashekar, Uygar S\"umb\"ul
}
\begin{document}

\maketitle

\begin{abstract}
  Reconstructing the intricate local morphology of neurons and their long-range projecting axons can address many connectivity related questions in neuroscience. The main bottleneck in connectomics pipelines is correcting topological errors, as multiple entangled neuronal arbors is a challenging instance segmentation problem. More broadly, segmentation of curvilinear, filamentous structures continues to pose significant challenges. To address this problem, we extend the notion of simple points from digital topology to connected sets of voxels (i.e. supervoxels) and propose a topology-aware neural network segmentation method with minimal computational overhead. We demonstrate its effectiveness on a new public dataset of 3-d light microscopy images of mouse brains, along with the benchmark datasets DRIVE, ISBI12, and CrackTree.
\end{abstract}

\section{Introduction}

High-throughput neuron reconstruction is a challenging 3-d instance segmentation problem and a major bottleneck of many data-driven neuroscience studies~\citep{gouwens2020integrated, winnubst2019reconstruction}. Deep learning-based methods are the leading framework for segmenting individual neurons, which is a critical step in reconstructing neural circuits~\citep{turaga2010, jan2018, lee2019, schmidt2024}. While these methods and many others have significantly improved segmentation quality, automated reconstruction of neurons from large 3-d images still suffers from topological errors (i.e. splits and merges) and requires substantial human proofreading. This can be attributed to two basic observations: (i) neuronal branches can be as thin as a single voxel and (ii) branches often touch or even overlap due to the imaging resolution. Consequently, a seemingly innocuous mistake at the single voxel level can produce catastrophically incorrect segmentations. 

A natural solution to these challenges is to account for the topology of the underlying objects during training. Several methods use persistent homology to capture the topology of segmented objects in terms of their Betti numbers~\citep{clough2019, hu2021topology, shit2023}. While these methods have achieved state-of-the-art results, they are computationally expensive with complexities that scale nonlinearly. In this paper, we present a more efficient approach based on an extension of simple voxels from digital topology. Our method aims to preserve the connectivity of segmented objects and has significantly faster training times that scale linearly.

In digital topology, a simple point in a 3-d binary image is defined as a foreground voxel whose deletion does not change the topology of the image~\citep{kong1989digital}. Specifically, deleting a simple voxel does not result in splits or merges, nor does it create or eliminate loops, holes, or objects. This concept has been used to penalize errors involving non-simple voxels when segmenting neurons~\citep{gornet2019reconstructing}. However, topological errors often involve multiple connected voxels, which is not addressed by a single voxel-based approach.

To overcome this limitation, we extend the concept of simple voxels to supervoxels (i.e., connected sets of voxels). We then propose a differentiable loss function based on this supervoxel characterization, which enables neural networks to be trained to minimize split and merge errors. Finally, we evaluate our approach on 3-d light microscopy images of mouse brains as well as on several 2-d segmentation datasets.

\section{Related Works}
Accurately segmenting fine-scale structures such as neurons, vessels, and roads from satellite images is a complex and well-studied problem. There are two general approaches: (1) improve feature representations~(e.g.,~\citep{wu2017, mosinka2018, hu2023learn, sheridan2023}) and (2) incorporate topology-inspired loss functions during training. The latter approach focuses on identifying critical locations where the neural network is prone to topological errors, and then using gradient-based updates to guide improvements.

\citet{bertrand1994new} studied topological characterization of simple voxels. The remarkable success of this work is that a complete topological description of a voxel can be derived through basic operations on its immediate neighborhood. Notably, this work does not invoke modern topological concepts such as homology or Morse theory to characterize voxels in low-dimensional digital topology. In many ways, the present study is inspired by that work. 

The method most closely related to ours is centerlineDice (clDice), which is also a connectivity-aware loss function~\citep{shit2021}. In this approach, soft skeletons of both the prediction and ground truth are generated by applying min- and max-pooling $k$ times during training. The loss is then computed by evaluating the overlap between the segmented objects and skeletons. Similar to our method, clDice also preserves topology up to homotopy equivalence -- if the hyperparameter $k$ is greater than the maximum observed radius.

Recently, persistent homology has been used in deep learning frameworks to track higher-order topological structures during training~\citep{hofer2017, hofer2019, chen2019, shit2023}. \citet{clough2019} used the Betti numbers of the ground truth as a topological prior and computed gradients to adjust the persistence of topological features. \citet{hu2019} introduced a penalty for discrepancies between persistence diagrams of the prediction and ground truth. More recently, \citet{hu2021topology} used discrete Morse theory to identify topologically significant structures and employed persistence-based pruning to refine them.

\citet{turaga2009maximin} introduced MALIS for affinity-based models to improve the predictions at maximin edges. This method involves costly gradient updates requiring a maximin search to identify voxels most likely to cause topological errors. \citet{funke2018large} introduced constrained MALIS to improve computational efficiency by computing gradients in two separate passes: one for affinities within ground truth objects and another for affinities between and outside these objects.

Several studies have leveraged the concept of non-simple voxels to develop topology-aware segmentation methods. \citet{gornet2019reconstructing} improved neuron segmentation by penalizing errors at non-simple voxels. Additionally, homotopy warping has been used to prioritize errors at non-simple voxels over minor misalignments at the boundaries ~\citep{jain2010, hu2022}. However, these approaches are computationally expensive since they require voxel-by-voxel analysis to detect non-simple voxels for each prediction.

\section{Method}
Let $G=(V,E)$ be an undirected graph with the vertex set $V=\{1,\ldots, n\}$. We assume that $G$ is a graphical representation of an image where the vertices represent voxels and edges are defined with respect to a $k$-connectivity constraint\footnote{We assume that $k\in\{4,8\}$ and $k\in\{6,18,26\}$ for 2-d and 3-d images, respectively~\citep{kong1989digital}.}. A ground truth segmentation $y=(y_1,\ldots,y_n)$ is a labeling of the vertices with $y_i\in\{0, \ldots, m\}$. Let $\hat y=(\hat y_1,\ldots,\hat y_n)$ denotes a prediction of the ground truth.

Let $F(y)=\{i\in V: y_i\neq0\}$ be the foreground of the vertex labeling, which may include multiple and potentially touching objects. Let $\mathcal S(y)\subseteq\mathscr P(V)$ be the set of connected components induced by the labeling $y$, where $\mathscr P(V)$ is the power set of $V$. 

In a labeled graph, the connected components are determined by the equivalence relation that $i\sim j$ if and only if $y_i=y_j$ with $i,j\in F(y)$ and there exists a path from $i$ to $j$ that is entirely contained within the same segment. An equivalence relation induces a partition over a set into equivalence classes that correspond to the connected components in this setting.

We propose a novel connectivity-preserving loss function to train topology-aware neural networks with the goal of avoiding false splits of, and false merges between foreground objects.
\begin{defn}\label{def:topoloss_voxels}
Let $\mathcal L:\mathbb R^n\times\mathbb R^n\rightarrow\mathbb R$ be the loss function given by
\begin{align*}
    \mathcal L(y,\hat y)&=(1-\alpha)\,\mathcal L_0(y,\hat y)
    +\alpha\,\beta\sum_{C\in\mathcal P(\hat y_+)}\mathcal L_0(y_C, \hat y_C) \\
    &+\alpha\,(1-\beta)\sum_{C\in\mathcal N(\hat y_\shortminus)}\mathcal L_0(y_C, \hat y_C)
\end{align*}
such that $\alpha,\beta\in[0,1]$ and $\mathcal L_0$ is an arbitrary loss function.
\end{defn}

We build upon a traditional loss function $\mathcal L_0$ (e.g. cross-entropy or Dice coefficient) by incorporating additional terms that penalize sets of connected voxels (i.e. supervoxels) responsible for connectivity errors. These supervoxels are identified by analyzing connected components in the false negative and false positive masks, which are obtained by comparing the foregrounds of $y$ and $\hat y$

The sets $\mathcal N(\hat y_\shortminus)$ and $\mathcal P(\hat y_+)$ consist of connected components whose addition or removal changes the number of connected components. A component that changes the underlying topology in this manner is called a \emph{critical} component. Next, we rigorously define this notion, and present an algorithm for detecting critical components.

\subsection{Critical Components}

Critical components generalize the notion of non-simple voxels from digital topology to supervoxels. In digital topology, a voxel is called non-simple if its addition or removal changes the number of connected components, holes, or cavities. Similarly, a supervoxel is called critical if its addition or removal changes the number of connected components. We use the term \emph{critical} as opposed to \emph{non-simple} because the definition is not a direct generalization for computational reasons. This more focused definition enables our supervoxel-based loss function to be computed in linear time, which is a significant improvement over existing topological loss functions.

\subsubsection{False Splits}
Let $\hat y_\shortminus$ be the false negative mask determined by comparing a prediction to the ground truth. Let $\mathcal S_y(\hat y_\shortminus)$ be the set of connected components of $\hat y_\shortminus$ with respect to $y$. The connected components in this set are defined by the following criteria: two voxels $i\sim j$ belong to the same component if:
\begin{quote}
    \begin{itemize}
        \item[(i)] $(\hat y_\shortminus)_i=(\hat y_\shortminus)_j$ with $i,j\in F(\hat y_\shortminus)$
        \item[(ii)] $y_i=y_j$
        \item[(iii)] There exists a path from $i$ to $j$ that lies  within the same segments.
    \end{itemize}
\end{quote}
This second condition ensures that each component in the false negative mask corresponds precisely to one connected component in the ground truth\footnote{Note that in binary segmentation, $\mathcal S(\hat y_\shortminus)$ is interchangeable with $\mathcal S_y(\hat y_\shortminus)$.}.

\emph{Negatively} critical components are determined by comparing the number of connected components in $y$ and $y\ominus C$ such that $C\in\mathcal S_y(\hat y_\shortminus)$. The notation $y\ominus C$ denotes ``removing'' a component from the ground truth. The result of this operation is a vertex labeling where the label of node $i\in V$ is given by
\begin{equation}\label{eq:remove_component}
    (y\ominus C)_i = \begin{cases}
        0, & \text{if } i\in C \\
        y_i, & \text{otherwise } \\
    \end{cases}
\end{equation}
The removal of a component only impacts a specific region within the graph; the component itself and the nodes connected to its boundary. Thus, for topological characterization, it is sufficient to check whether its removal changes the number of connected components in that local region instead of the entire graph. Let $N(C)\subseteq V$ be the neighborhood surrounding a component $C\in\mathcal S(y)$ such that $N(C)=\{i\in V : \{i, j\}\in E \text{ and } j\in C \}$. Let $y\cap N(C)$ represent the labeling $y$ within $N(C)$.

\begin{figure}[htbp!]
    \centering
    \includegraphics[width=0.95\columnwidth]{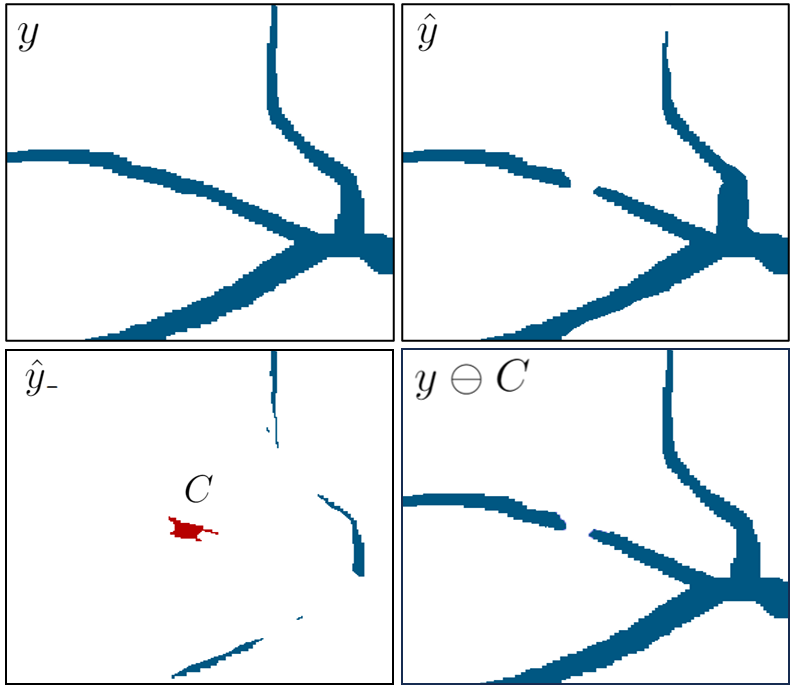}
    \caption{{\bf Top:} Image patches of ground truth and predicted segmentations. {\bf Bottom:} False negative mask with the component $C$ highlighted in red. $C$ is negatively critical since its removal changes the connectivity of the ground truth.} 
    \label{fig:neg-critical}
\end{figure}

\begin{defn}\label{def:neg_critical}
    A component $C\in\mathcal S_y(\hat y_\shortminus)$ is said to be negatively critical if $\vert\mathcal S\big(y \cap N(C)\big)\vert\neq \vert\mathcal S\big((y\ominus C)
    \cap N(C)\big)\vert$.
\end{defn}

Negatively critical components change the local topology by either deleting an entire component or altering the connectivity between vertices in $N(C)$ from the ground truth. In the latter case, the removal of such components locally disconnects some $i,j\in N(C)$ so that it is impossible to find a path (in this neighborhood) that does not pass through $C$. Based on this intuition, we can establish an equivalent definition of negatively critical components as components that are either (1) identical to a component in the ground truth or (2) locally disconnect at least one pair of nodes in $N(C)$ after being removed.

\begin{thm}\label{thm:neg_critical_general}
    A component $C\in\mathcal S_y(\hat y_\shortminus)$ is negatively critical if and only if there exists an $A\in\mathcal S(y\cap N(C))$ with $A\supseteq C$ such that either $(1)$ $A=C$ or $(2)$ $\exists\, v_0, v_k\in A\setminus C$ such that there does not exist a path $(v_0,\ldots, v_k)\subseteq N(C)$ with $v_i\notin C$ for $i=1,\ldots, k-1$. (Proof is in the Appendix.) 
\end{thm}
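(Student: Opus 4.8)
The plan is to prove both directions of the equivalence by unpacking the definition of negatively critical (Definition~\ref{def:neg_critical}) in terms of how removing $C$ affects the partition of $N(C)$ into connected components under the labeling $y$. Throughout, write $S = \mathcal S(y\cap N(C))$ and $S' = \mathcal S((y\ominus C)\cap N(C))$. The key structural observation is that removing $C$ can only ever \emph{increase} the number of components restricted to $N(C)$ or leave it unchanged: deleting foreground voxels and their incident edges never merges two previously separate components. Hence $|S|\neq|S'|$ is equivalent to $|S'|>|S|$, i.e. some component $A\in S$ either disappears entirely from $S'$ or is split into two or more pieces in $S'$.

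For the forward direction, I would assume $C$ is negatively critical, so $|S'|>|S|$, and let $A\in S$ be a component that is ``responsible'' for the discrepancy. Since $C\subseteq F(\hat y_\shortminus)\subseteq F(y)$ and $C$ is connected with $C\subseteq N(C)$, all of $C$ lies inside a single component of $S$ — call it $A$; then $A\supseteq C$. Now distinguish two cases on $A\setminus C$. If $A\setminus C=\emptyset$, then $A=C$, which is case (1). If $A\setminus C\neq\emptyset$, then in $S'$ the vertices of $A\setminus C$ must fail to all lie in one component (otherwise removing $C$ would not have changed the count, because $A$ would simply shrink to $A\setminus C$ and no other component is affected — here I would argue that components other than $A$ are untouched since they have no vertices in $C$ and no edges into $C$). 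So there exist $v_0,v_k\in A\setminus C$ lying in different components of $(y\ominus C)\cap N(C)$, which is exactly the statement that no path from $v_0$ to $v_k$ stays in $N(C)$ while avoiding $C$ at its interior vertices; that is case (2).

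For the converse, suppose such an $A$ exists. If $A=C$, then $A$ is a component of $S$ that consists entirely of vertices removed by $\ominus C$, so it contributes to $|S|$ but not to $|S'|$, while no \emph{new} component can arise; to get $|S'|\neq|S|$ I would still need to rule out a compensating split elsewhere, but since removal only splits or deletes (never merges) and $A$ is deleted, $|S'|\ge|S|-1$ from the other components' perspective is not quite enough — cleaner is to note $|S'|=|S|-1+(\text{number of extra pieces created elsewhere})$, and deletion of $A=C$ removes one component while any splitting only adds, so actually the count strictly changes unless... — the safe route is: $A=C$ disappears and nothing merges, so $|S'| = |S| - 1 + (\text{splits})$; if this equals $|S|$ we'd need exactly one split, which is possible, so I should instead just observe that $A=C$ already forces the ``entire component deleted'' scenario of Definition~\ref{def:neg_critical}'s underlying intuition and handle it directly via the global argument. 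In case (2), the two vertices $v_0,v_k\in A\setminus C$ are connected in $y\cap N(C)$ (both in $A$) but, by hypothesis, every path between them in $N(C)$ must pass through a non-endpoint vertex of $C$; hence in $(y\ominus C)\cap N(C)$ they are disconnected, so $A$ splits into at least two components, giving $|S'|>|S|$ (no merging occurs), so $C$ is negatively critical.

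The main obstacle I anticipate is the bookkeeping in case (1) of the converse and, more generally, making the ``removal never merges, only splits or deletes'' principle fully rigorous at the level of $N(C)$ rather than $V$: one must be careful that the labeling restricted to $N(C)$ is taken consistently on both sides (same vertex set $N(C)$, with $C$'s vertices relabeled to $0$ hence dropped from the foreground), and that components of $S$ \emph{not} meeting $C$ are literally unchanged as subsets — this requires checking that no edge of $G$ with both endpoints in $N(C)$ is incident to $C$ unless one endpoint is in $C$, which is immediate from the definitions but should be stated. Once that principle is nailed down, the equivalence ``$|S|\neq|S'|$ iff some $A\supseteq C$ is deleted or split'' does all the work, and translating ``split'' into the path-nonexistence condition is routine.
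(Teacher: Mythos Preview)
Your overall approach---isolate the unique component $A\in\mathcal S(y\cap N(C))$ containing $C$, then analyze $A\setminus C$---is exactly the paper's strategy (the paper formalizes it via the identity $|S'|=|S|-1+\mu(G'[A\setminus C])$ after establishing uniqueness of $A$ in a lemma). However, there is a genuine error that propagates into the gap you yourself flag.

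Your ``key structural observation'' that removing $C$ can only increase or preserve $|S|$, hence $|S|\neq|S'|\iff|S'|>|S|$, is false: when $A=C$, the component $A$ is deleted outright and $|S'|=|S|-1<|S|$. The paper's forward direction explicitly treats the two cases $|S|>|S'|$ and $|S|<|S'|$ separately for this reason. Your forward argument survives this error only by accident (your case split on $A\setminus C$ is sound independently of the monotonicity claim), but the error is exactly what derails your converse for condition~(1).

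In that converse case you worry about a ``compensating split elsewhere'' that might make $|S'|=|S|$ despite $A=C$ being deleted. There is no such split: you already argued (in the forward direction) that components other than $A$ are untouched because they contain no vertex of $C$ and hence lose no vertices or edges. That same observation applies here---$C$ lies in a \emph{unique} $A$, so removing $C$ affects only $A$, and if $A=C$ the count drops by exactly one. The fix is simply to invoke that uniqueness/locality fact again rather than retreating to a vague ``global argument.'' Once you do, both converse cases are clean: $A=C$ gives $|S'|=|S|-1$, and condition~(2) gives $|S'|\geq|S|+1$.
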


One computational challenge in both characterizations is the need to recompute connected components within the neighborhood $N(C)$ for every $C\in\mathcal S_y(\hat y_\shortminus)$. In the worst case, the complexity is $\mathcal O(n^2)$ with respect to the number of voxels in the image. However, we can develop a more efficient algorithm with $\mathcal O(n)$ complexity by leveraging two useful facts: (i) neurons are tree-structured objects, implying that, (ii) negatively critical components change both the local \emph{and} global topology. 

Recall that a negatively critical component $C\in\mathcal S_y(\hat y_\shortminus)$ changes the local topology of $N(C)$ in the sense that $\vert\mathcal S\big(y \cap N(C)\big)\vert\neq \vert\mathcal S\big((y\ominus C)\cap N(C)\big)\vert$. Analogously, $C$ also changes the global topology if $\vert\mathcal S(y)\vert\neq \vert\mathcal S(y\ominus C)\vert$. In this special case, we can establish an equivalent definition, similar to Theorem \ref{thm:neg_critical_general}, that utilizes $\mathcal S(y)$ and $\mathcal S(\removalC)$ in place of $\mathcal S(y\cap N(C))$ and $\mathcal S((y\ominus C)\cap N(C))$.

This characterization can be further simplified by using $\mathcal S(\removal)$ instead of $\mathcal S(\removalC)$, where $\removal$ represents the ground truth after removing all components in the false negative mask:
\begin{equation*}
    (\removal)_i = \begin{cases}
        0, & \text{if } (\hat y_\shortminus)_i=1 \\
        y_i, & \text{otherwise } \\
    \end{cases}
\end{equation*}

Note that this characterization is the key to overcoming nonlinear computational complexity.
\begin{cor}\label{cor:neg_critical_special}
    A component $C\in\mathcal S_y(\hat y_\shortminus)$ is negatively critical with $\vert\mathcal S(y)\vert\neq \vert\mathcal S(\removalC)\vert$ if and only if there exists an $A\in\mathcal S(y)$ with $A\supseteq C$ such that either $(1)$ $A=C$ or $(2)$ $\exists B_1,B_2\in\mathcal S(\removal)$ with $B_1,B_2\subset A$ such that $B_1\cup C\cup B_2$ is connected. (Proof is in the Appendix.) 
\end{cor}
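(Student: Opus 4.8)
The plan is to follow the two-stage refinement indicated just before the statement. First I would upgrade Theorem~\ref{thm:neg_critical_general} to an intermediate ``global'' characterization, obtained by replacing $\mathcal S(y\cap N(C))$ and $\mathcal S((y\ominus C)\cap N(C))$ there with $\mathcal S(y)$ and $\mathcal S(\removalC)$; then I would pass from $\removalC$ to the pruned labeling $\removal$. The structural facts I would rely on at the outset are: by the definition of $\mathcal S_y(\hat y_\shortminus)$, $C$ consists of foreground voxels sharing a single ground-truth label, so there is a unique $A\in\mathcal S(y)$ with $A\supseteq C$; removing $C$ changes nothing outside $A$ and its immediate boundary; and deleting vertices may split or destroy a connected component but can never merge two of them.

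For the first stage I would argue that $|\mathcal S(y)|\neq|\mathcal S(\removalC)|$ holds precisely when either $A=C$ (so $A$ is deleted) or the induced subgraph on $A\setminus C$ is disconnected (so $A$ splits into pieces $P_1,\dots,P_k$ with $k\ge 2$). In the split case each $P_\ell$ is adjacent to $C$, since otherwise $A$ would already be disconnected, so a bridging vertex in each piece shows $P_1\cup C\cup P_2$ is connected; this is the analogue of condition~(2) with $\mathcal S(\removalC)$, and the split is already witnessed inside $N(C)$, so $C$ is negatively critical in the sense of Definition~\ref{def:neg_critical}. (This is the promised fact that, in this special case, a negatively critical component changes both the local \emph{and} the global topology.) The case $A=C$ gives condition~(1), and the converse of the intermediate equivalence is immediate, since two distinct components of $\mathcal S(\removalC)$ contained in $A$ already exhibit the split.

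The second stage replaces $\mathcal S(\removalC)$ by $\mathcal S(\removal)$. For the forward direction the key lemma I would establish is that a vertex of $A\setminus C$ adjacent to $C$ is never a false negative: otherwise it would belong to a component of $\mathcal S_y(\hat y_\shortminus)$ carrying the same ground-truth label as $C$ and adjacent to $C$, which conditions~(i)--(iii) force to coincide with $C$. Hence the bridging vertices from the first stage survive in $\removal$; their components in $\removal$ are distinct (they lie in distinct pieces $P_\ell$, and moving from $\removalC$ to $\removal$ only deletes more voxels), are contained in $A$, and together with the connected set $C$ form a connected union, so condition~(2) holds (and the case $A=C$ gives condition~(1)). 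For the converse, suppose distinct $B_1,B_2\in\mathcal S(\removal)$ with $B_1,B_2\subset A$ satisfy that $B_1\cup C\cup B_2$ is connected. If $A=C$ then removing $C$ deletes $A$ and we are done; otherwise I must show that $A\setminus C$ is disconnected. This is the single place the tree structure of $A$ (fact~(i)) enters: were $A\setminus C$ connected, $C$ would attach to it by a single edge and hence act as a ``dead end'' that cannot join the mutually non-adjacent components $B_1$ and $B_2$, contradicting connectedness of $B_1\cup C\cup B_2$. Therefore $A\setminus C$ is disconnected, which gives $|\mathcal S(y)|\neq|\mathcal S(\removalC)|$, and as before the split is visible in $N(C)$, so $C$ is negatively critical.

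The hard part will be exactly this last step: certifying, from a connection observed only after pruning \emph{all} of $A$'s false negatives, that $C$ \emph{alone} genuinely cuts $A$. The claim fails when $A$ contains a cycle --- there $C$ can bridge two components of $\removal$ inside $A$ without globally disconnecting $A$ --- so the argument must use tree-structuredness of the arbors in an essential way. I would also be careful to state and use the non-adjacency lemma for distinct false-negative components, to read condition~(2) with $B_1\neq B_2$ (otherwise it degenerates to something always true when $A\neq C$), and to handle the boundary case $A=C$ separately on each side.
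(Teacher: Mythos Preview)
Your two-stage plan mirrors the paper's appendix closely: stage one is Lemma~\ref{lemma:neg_critical_special} together with Lemma~\ref{lemma:path-to-union} (global analogue of Theorem~\ref{thm:neg_critical_general}, then rephrase the path obstruction via components of $\mathcal S(\removalC)$), and stage two is Lemma~\ref{lemma:one-to-all} (swap $\mathcal S(\removalC)$ for $\mathcal S(\removal)$). The forward direction of your second stage, via the observation that a neighbor of $C$ inside $A$ can never be a false negative, is exactly the paper's argument.

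Where you genuinely add something is the converse of stage two. The paper dispatches the converse of Lemma~\ref{lemma:one-to-all} with ``the converse holds by applying the same argument,'' but running the argument symmetrically only produces $B'_1,B'_2\in\mathcal S(\removalC)$ containing the bridging vertices; it does not show they are \emph{distinct}, and your cycle observation demonstrates they need not be when $A$ has loops (two false-negative components on opposite arcs give $B_1,B_2\in\mathcal S(\removal)$ bridged by $C$ while $A\setminus C$ stays connected). Your dead-end argument---if $A$ is a tree and $A\setminus C$ is connected then exactly one edge crosses from $C$ to $A\setminus C$, so $C$ cannot be adjacent to two non-adjacent $B_i$---is the right way to close this, and it makes explicit the tree-structure hypothesis that the main text announces but the appendix proofs never invoke. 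Your insistence on reading condition~(2) with $B_1\neq B_2$ is likewise warranted: Lemma~\ref{lemma:path-to-union} uses disjointness of $B'_1,B'_2$ in its converse without recording it as a hypothesis, so the chain of equivalences only goes through with distinctness understood.
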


\subsubsection{False Merges}

Let $\hat y_+$ be the false positive mask determined by comparing the prediction to the ground truth. Analogously, a component in the false positive mask is positively critical if its \emph{addition} to the ground truth changes the number of connected components in the immediate neighborhood. While this notion can be articulated, for consistency and to leverage previous results, we opt for an equivalent formulation.  Alternatively, a component in the false positive mask is positively critical if its \emph{removal} from the predicted segmentation changes the topology.

\begin{defn}
    A component $C\in\mathcal S_y(\hat y_+)$ is said to be positively critical if $\vert\mathcal S(\hat y\cap N(C))\vert \neq \vert\mathcal S(\additionC\cap N(C))\vert$.
\end{defn}

\begin{figure}[htbp!]
    \centering
    \includegraphics[width=0.95\columnwidth]{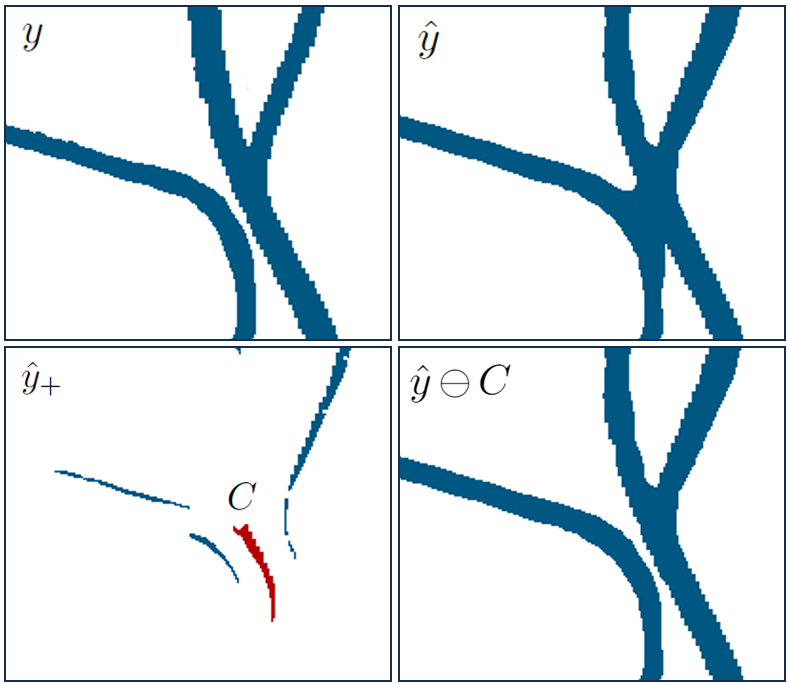}
    \caption{{\bf Top:} Image patches of ground truth and predicted segmentations. {\bf Bottom:} False positive mask with a single component $C$ highlighted. $C$ is positively critical since its removal changes the number of connected components.} 
    \label{fig:pos-critical}
\end{figure}

Positively critical components change the local topology by either (1) creating a component or (2) altering the connectivity between ground truth objects. In the latter, these components connect pairs of nodes that belong to locally distinct components. Equivalently, their removal causes pairs of nodes to become locally disconnected. Next, we present an equivalent definition that characterizes positively critical components as satisfying one of these conditions.

\begin{thm}\label{thm:pos_critical_general}
    A component $C\in\mathcal S_y(\hat y_+)$ is positively critical if and only if there exists an $A\in\mathcal S(\hat y)$ with $A\supseteq C$ such that either $(1)$ $A=C$ or $(2)$ $\exists\, v_0, v_k\in A\setminus C$ such that there does not exist a path $(v_0,\ldots, v_k)\subseteq N(C)$ with $v_i\notin C$ for $i=1,\ldots, k-1$. (Proof is in the Appendix.)
\end{thm}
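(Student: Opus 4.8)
The plan is to mirror the proof of Theorem~\ref{thm:neg_critical_general}, substituting the prediction $\hat y$ for the ground truth $y$ and the removal $\additionC$ for $y\ominus C$, and then to add one bookkeeping step relating the local components of $\hat y\cap N(C)$ to the component $A\in\mathcal S(\hat y)$ named in the statement. The elementary fact I would lean on throughout is that deleting the voxels of $C$ from $\hat y$ leaves every connected component of $\hat y\cap N(C)$ disjoint from $C$ untouched; hence the only component whose fate matters is the unique $A'\in\mathcal S(\hat y\cap N(C))$ that contains $C$ (unique because $C$ is connected and carries a single predicted label, and, for $|C|\ge 2$, $C\subseteq N(C)$; the case $|C|=1$ is absorbed into the convention that a voxel lies in its own neighbourhood). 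From this I would deduce that $C$ is positively critical, i.e. $|\mathcal S(\hat y\cap N(C))|\neq|\mathcal S(\additionC\cap N(C))|$, exactly when $A'\setminus C$ is either empty or disconnected within $N(C)$ — the dichotomy that produces conditions $(1)$ and $(2)$.

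For the forward direction I would assume $C$ positively critical and branch on this dichotomy. If $A'\setminus C=\varnothing$ then $A'=C$; I would then lift this to the global statement $A=C$ by noting that every neighbour of $C$ outside $C$ lies in $N(C)$, so were such a neighbour foreground with the predicted label of $C$ it would belong to $A'=C$, a contradiction, whence $C$ is a maximal connected same-label foreground set. If instead $A'\setminus C$ is nonempty but splits into at least two pieces inside $N(C)$, I would pick $v_0,v_k$ in distinct pieces: both lie in $A'\setminus C\subseteq A\setminus C$ for the unique $A\in\mathcal S(\hat y)$ with $A\supseteq C$, both lie in $N(C)$, and no within-segment path $(v_0,\dots,v_k)\subseteq N(C)$ avoiding $C$ can join them, which is $(2)$.

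For the converse I would assume such an $A\in\mathcal S(\hat y)$ exists. Under $(1)$, $A=C$ forces $C$ to have no foreground same-label neighbour outside $C$, so $A'=C$; deleting $C$ then erases this entire local component and decreases $|\mathcal S(\cdot\cap N(C))|$. Under $(2)$, the witnesses $v_0,v_k\in A\setminus C$, which lie in $N(C)$, share the label of $A$ with $C$, so each is joined to $C$ by a single edge inside $\hat y\cap N(C)$; thus $v_0,v_k$ and $C$ occupy one component of $\hat y\cap N(C)$ before deletion, while after deletion the hypothesis forbids any within-$N(C)$ path from $v_0$ to $v_k$ avoiding $C$, so that component breaks into $\ge 2$ pieces and $|\mathcal S(\cdot\cap N(C))|$ changes. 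Either way $C$ is positively critical.

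The hard part is not the topology but the local-to-global bookkeeping: justifying that the false-positive component $C$ lies inside exactly one $A\in\mathcal S(\hat y)$, that the portion of $A$ reaching into $N(C)$ and touching $C$ forms a single component of $\hat y\cap N(C)$, and that ``$A'$ is entirely consumed by $C$'' upgrades to $A=C$. A related subtlety I would address explicitly is that condition $(2)$ is only meaningful when the witnesses $v_0,v_k$ are taken in $N(C)$ (this is automatic in Theorem~\ref{thm:neg_critical_general}, where $A\in\mathcal S(y\cap N(C))$ already forces $A\setminus C\subseteq N(C)$), and that single-voxel components $C$ with $C\not\subseteq N(C)$ must be handled by the $N[C]=N(C)\cup C$ convention or argued separately. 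With those points settled, the remaining verifications are a verbatim transcription of the argument behind Theorem~\ref{thm:neg_critical_general}.
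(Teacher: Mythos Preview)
Your plan—prove the positive case by the substitution $(y,\hat y_\shortminus)\mapsto(\hat y,\hat y_+)$ in the argument for Theorem~\ref{thm:neg_critical_general}—is exactly the paper's: the appendix gives no separate proof of Theorem~\ref{thm:pos_critical_general}, and the positive analogues are handled by this renaming (see the one-line proof of Corollary~\ref{cor:pos_critical_special}). You go further than the paper by noticing that the two statements are not verbatim duals: Theorem~\ref{thm:neg_critical_general} takes $A\in\mathcal S(y\cap N(C))$, whereas Theorem~\ref{thm:pos_critical_general} takes $A\in\mathcal S(\hat y)$, and you rightly flag the extra local-to-global bookkeeping this would require.

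There is, however, a real gap in your converse under condition~(2), and it is not one you can close. You assert that the witnesses ``$v_0,v_k\in A\setminus C$ \dots\ lie in $N(C)$,'' but with $A\in\mathcal S(\hat y)$ nothing in the hypothesis forces this, and without it the step ``each is joined to $C$ by a single edge inside $\hat y\cap N(C)$'' fails. In fact the converse as literally stated is false: let $\hat y$ be a single-label path on voxels $1,\dots,10$, let $C=\{1,2,3\}$ be a false-positive tail, and take $A=\{1,\dots,10\}\in\mathcal S(\hat y)$; then $v_0=5$, $v_k=10$ satisfy~(2) vacuously since neither lies in $N(C)=\{1,2,3,4\}$, yet $\lvert\mathcal S(\hat y\cap N(C))\rvert=\lvert\mathcal S((\hat y\ominus C)\cap N(C))\rvert=1$, so $C$ is not positively critical. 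The ``subtlety'' you identified is therefore a defect in the theorem's wording rather than a wrinkle to be ironed out: the substitution proof you and the paper both intend really yields the version with $A\in\mathcal S(\hat y\cap N(C))$, and only the forward direction and condition~(1) of the converse survive the lift to global~$A$.
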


Similarly, positively critical components present the same computational challenge of needing to recompute connected components for every $C\in\mathcal S_y(\hat y_+)$. However, we can avoid this expensive calculation by utilizing a corollary of Theorem \ref{thm:pos_critical_general} that establishes an equivalent definition of positively critical components that also change the global topology. This characterization uses $\mathcal S(\hat y)$ and $\mathcal S(\addition)$, instead of $\mathcal S(y\cap N(C))$ and $\mathcal S((\additionC)\cap N(C))$, where $\addition$ denotes removing every component in the false positive mask from the prediction via
\begin{equation*}
    (\addition)_i = \begin{cases}
        0, & \text{if } (\hat y_+)_i=1 \\
        \hat y_i, & \text{otherwise } \\
    \end{cases}
\end{equation*}

\begin{cor}\label{cor:pos_critical_special}
    A component $C\in\mathcal S_y(\hat y_+)$ is positively critical with $\vert\mathcal S(\hat y)\vert\neq \vert\mathcal S(\additionC)\vert$ if and only if there exists an $A\in\mathcal S(\hat y)$ with $A\supseteq C$ such that either $(1)$ $A=C$ or $(2)$ $\exists B_1,B_2\in\mathcal S(\addition)$ with $B_1,B_2\subset A$ such that $B_1\cup C\cup B_2$ is connected. (Proof is in the Appendix.)
\end{cor}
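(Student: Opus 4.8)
\textbf{Proof proposal for Corollary~\ref{cor:pos_critical_special}.}

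The plan is to deduce this corollary from Theorem~\ref{thm:pos_critical_general} by showing that, under the extra hypothesis $\vert\mathcal S(\hat y)\vert\neq\vert\mathcal S(\additionC)\vert$, the local characterization in the theorem can be replaced by a global one, and that the ``locally disconnects a pair'' condition (2) of the theorem is equivalent to the ``bridge between two global components'' condition (2) of the corollary. The symmetry with Corollary~\ref{cor:neg_critical_special} should be exploited: the false-positive case is the false-negative case applied to $\hat y$ in place of $y$ with $\addition$ in place of $\removal$, so ideally I would prove one transfer lemma and invoke it twice. Concretely, I would first record that removing a component $C\in\mathcal S_y(\hat y_+)$ from $\hat y$ only affects $C$ together with $N(C)$, so $\vert\mathcal S(\hat y)\vert\neq\vert\mathcal S(\additionC)\vert$ forces the corresponding \emph{local} inequality $\vert\mathcal S(\hat y\cap N(C))\vert\neq\vert\mathcal S((\additionC)\cap N(C))\vert$; hence $C$ is positively critical and Theorem~\ref{thm:pos_critical_general} applies, giving some $A\in\mathcal S(\hat y)$ with $A\supseteq C$ satisfying (1) or (2).

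Next I would handle the two cases. Case (1), $A=C$: here $C$ is an entire connected component of $\hat y$, its removal destroys exactly that component, and $A=C$ is literally condition (1) of the corollary, so nothing more is needed. Case (2): there exist $v_0,v_k\in A\setminus C$ with no path from $v_0$ to $v_k$ inside $N(C)$ avoiding $C$. I would argue that in $\addition$ (the prediction with \emph{all} false-positive components removed) the vertices of $A\setminus C$ split into at least two connected components, and that $v_0$ and $v_k$ land in different ones; call their components $B_1$ and $B_2$. These satisfy $B_1,B_2\subset A$ (they are subsets of $A\setminus C\subset A$, using that $A$ is $\hat y$-connected and $C\subseteq A$), and $B_1\cup C\cup B_2$ is connected because in $\hat y$ the component $A$ is connected and the only way a path from $B_1$ to $B_2$ within $A$ can be cut in $\addition$ is by routing through removed false-positive voxels — but since $C$ is the false-positive component meeting $N(B_1)$ and $N(B_2)$ relevant here, re-adding $C$ reconnects them. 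That gives condition (2) of the corollary.

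For the converse, I would assume some $A\in\mathcal S(\hat y)$, $A\supseteq C$, with (1) $A=C$ or (2) $B_1,B_2\in\mathcal S(\addition)$, $B_1,B_2\subset A$, $B_1\cup C\cup B_2$ connected, and show $C$ is positively critical. In case (1) the removal of $C$ eliminates a whole component, changing $\vert\mathcal S(\hat y\cap N(C))\vert$ (it was $1$ on $C$'s side, becomes $0$). In case (2), pick $v_0\in B_1$, $v_k\in B_2$ adjacent to $C$ (such vertices exist since $B_1\cup C\cup B_2$ is connected and $B_1,B_2$ are distinct components of $\addition$, so each must touch $C$); then $v_0,v_k\in N(C)\cap A$, they lie in one component of $\hat y\cap N(C)$ (via $C$), but after removing $C$ they are separated in $(\additionC)\cap N(C)$ — otherwise a $C$-avoiding path in $N(C)$ would exist, contradicting that $B_1,B_2$ are distinct in $\addition$ (any such path uses only non-$C$, non-false-positive vertices). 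Hence the local component count strictly increases, and by the localization remark this also yields the stated global inequality $\vert\mathcal S(\hat y)\vert\neq\vert\mathcal S(\additionC)\vert$, closing the equivalence.

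The main obstacle I anticipate is the bookkeeping around \emph{which} vertices get removed in $\addition$ versus $\additionC$: $\addition$ deletes every false-positive component, not just $C$, so I must be careful that the $B_i$ produced from Theorem~\ref{thm:pos_critical_general}'s local statement are genuinely full components of the \emph{global} $\addition$ and that re-adding $C$ alone (not the other false-positive components) suffices to connect $B_1\cup C\cup B_2$. This is where the tree-structured assumption on the underlying objects is implicitly doing work — it rules out the pathological situation where $A\setminus C$ stays connected in $\addition$ through some other route while $C$ still bridges a genuine global gap — and I would state and use that assumption explicitly, mirroring how it is invoked for Corollary~\ref{cor:neg_critical_special}.
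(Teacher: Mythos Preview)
Your proposal is broadly sound but takes a much longer route than the paper. The paper's proof of Corollary~\ref{cor:pos_critical_special} is a one-line duality reduction: set $z=\hat y$ and $\hat z_\shortminus=\hat y_+$, then Corollary~\ref{cor:neg_critical_special} applied to $(z,\hat z_\shortminus)$ \emph{is} the statement of Corollary~\ref{cor:pos_critical_special}. You actually flag this symmetry yourself (``the false-positive case is the false-negative case applied to $\hat y$ in place of $y$''), but then abandon it and re-derive everything from Theorem~\ref{thm:pos_critical_general}, essentially reproving the analogues of Lemmas~\ref{lemma:neg_critical_special}--\ref{lemma:one-to-all} inline. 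That buys you nothing: every subtlety you worry about (the bookkeeping between $\addition$ and $\additionC$, whether the $B_i$ are genuine global components, the role of the tree-structure hypothesis) has already been absorbed into the proof of Corollary~\ref{cor:neg_critical_special}, so redoing it here is redundant. One concrete wobble in your direct argument: in the converse, case~(2), your parenthetical ``any such path uses only non-$C$, non-false-positive vertices'' is not justified---a $C$-avoiding path in $(\additionC)\cap N(C)$ may well traverse \emph{other} false-positive components, so distinctness of $B_1,B_2$ in $\mathcal S(\addition)$ does not immediately contradict connectivity in $\additionC$; this is exactly the gap Lemma~\ref{lemma:one-to-all} closes in the negative setting, and the duality approach lets you inherit that fix for free rather than re-arguing it.
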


\subsection{Computing Critical Components}\label{sec:compute}

Although topological loss functions improve segmentation accuracy, one major drawback is that they are computationally expensive. A key advantage of our proposed method is that the runtime is $\mathcal O(n)$ with respect to the number of voxels. In contrast, related methods have either $\mathcal O(n\log n)$ or $\mathcal O(n^2)$ complexity (e.g.~\citet{turaga2009maximin, gornet2019reconstructing, hu2021topology, shit2021, hu2023learn}). 

In the case of identifying non-simple voxels, Bertrand and Malandain (1994) proved that it is sufficient to examine the topology of the neighborhood. Similarly, we can determine whether a component is critical by checking the topology of nodes connected to the boundary. For the remainder of this section, we focus the discussion on computing negatively critical components since the same algorithm can be used to compute positively critical components.

Let $D(C)=N(C)\setminus C$ be the set of nodes connected to the boundary of a component $C\in \mathcal S_y(\hat y_\shortminus)$. Assuming that a negatively critical component also changes the global topology, Corollary \ref{cor:neg_critical_special} can be used to establish analogous conditions on the set $D(C)$ that are useful for fast computation.

\begin{cor}\label{cor:bfs}
    A component $C\in\mathcal S_y(\hat y_\shortminus)$ is negatively critical with $\vert\mathcal S(y)\vert\neq \vert\mathcal S(\removalC)\vert$ if and only if $\exists A\in\mathcal S(y)$ with $A\supseteq C$ such that either $(1)$ $\nexists\, i\in D(C)$ with $i\in A$ or $(2)$ $\exists B_1, B_2\in\mathcal S(\removal)$ with $B_1,B_2\subset A$ such that $i\in B_1$ and $j\in B_2$ for some $i,j \in D(C)$. (Proof is in the Appendix.)
\end{cor}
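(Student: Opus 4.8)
\emph{Proof plan.} The plan is to observe that Corollary~\ref{cor:bfs} is nothing but Corollary~\ref{cor:neg_critical_special} with conditions~$(1)$ and~$(2)$ restated in terms of $D(C)=N(C)\setminus C$, so it suffices to check that the two pairs of conditions are equivalent and then quote Corollary~\ref{cor:neg_critical_special}. First I would note that, since $C$ is connected and contained in a single ground truth object, there is a unique $A\in\mathcal S(y)$ with $C\subseteq A$, and every ``$\exists A\in\mathcal S(y)$ with $A\supseteq C$'' appearing in both corollaries refers to this same $A$; hence it is enough to argue the equivalence of the conditions for this fixed $A$.

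For condition~$(1)$, I would show $A=C$ if and only if no $i\in D(C)$ lies in $A$. The forward direction is immediate because $D(C)\cap C=\emptyset$. For the converse, assuming $A\neq C$, I would pick $v\in A\setminus C$ and a path inside the connected component $A$ from $v$ to a vertex of $C$; the last vertex $u$ on this path that is not in $C$ is adjacent to $C$ and lies in $A\setminus C$, hence $u\in N(C)\setminus C=D(C)$, which is the required witness in $D(C)\cap A$.

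For condition~$(2)$, I would fix distinct $B_1,B_2\in\mathcal S(\removal)$ with $B_1,B_2\subset A$ (so $B_1\cap C=B_2\cap C=\emptyset$, since forming $\removal$ deletes all false negatives) and establish the key fact that $G$ has no edge between $B_1$ and $B_2$: such an edge would join two foreground vertices of $\removal$ carrying the same label (both lie in $A$), forcing $B_1=B_2$. Given this, the subgraph induced on $B_1\cup C\cup B_2$ -- with $B_1$, $B_2$, $C$ each connected and no $B_1$--$B_2$ edges -- is connected exactly when $C$ is adjacent to both $B_1$ and $B_2$; and ``$C$ adjacent to $B_1$'' is the same as ``$\exists\,i\in B_1$ with $i\in D(C)$'', because any neighbor of $C$ lying in $B_1$ is automatically outside $C$. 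This is precisely condition~$(2)$ of Corollary~\ref{cor:bfs}, and combining the two equivalences with Corollary~\ref{cor:neg_critical_special} yields the claim. The positively critical version follows identically after replacing $y$, $\removal$, $\mathcal S_y(\hat y_\shortminus)$ by $\hat y$, $\addition$, $\mathcal S_y(\hat y_+)$ and invoking Corollary~\ref{cor:pos_critical_special}.

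The step I expect to be the main obstacle is the ``no edge between $B_1$ and $B_2$'' claim: it is what guarantees that, although forming $\removal$ erases \emph{every} false-negative supervoxel and not just $C$, the pieces $B_1$ and $B_2$ are still genuinely separate components, so that reinserting $C$ alone is exactly the operation whose effect the boundary-vertex test on $D(C)$ measures -- ensuring that passing from the local sets $\mathcal S(y\cap N(C))$, $\mathcal S((\removalC)\cap N(C))$ to the global-then-trimmed sets $\mathcal S(y)$, $\mathcal S(\removal)$ does not change which pairs of boundary vertices $C$ separates. Everything else is the same style of short path-tracing argument used for condition~$(1)$.
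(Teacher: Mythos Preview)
Your proposal is correct and follows essentially the same route as the paper: both proofs reduce Corollary~\ref{cor:bfs} to Corollary~\ref{cor:neg_critical_special} by checking that condition~$(1)$ there is equivalent to ``$D(C)\cap A=\varnothing$'' and condition~$(2)$ to the existence of boundary witnesses $i,j\in D(C)$ in distinct $B_1,B_2$. Your treatment is in fact slightly more explicit than the paper's---you spell out the path-tracing for condition~$(1)$ and justify why $B_1\cup B_2$ is disconnected (the paper simply asserts this), but the underlying argument is the same.
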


The key to achieving linear complexity is to precompute $\mathcal S(y)$ and $\mathcal S(\removal)$, then use a breadth-first search (BFS) to compute $\mathcal S_y(\hat y_\shortminus)$ while simultaneously checking Conditions 1 and 2. Intuitively, the core idea is that once this BFS reaches the boundary of the component, we can visit all nodes in $D(C)$ and efficiently check the conditions.

Let $r\in F(\hat y_\shortminus)$ be the root of the BFS. Given a node $j\in D(C)$, Conditions 1 and 2 can be efficiently checked using a hash table that stores the connected component label of $j$ in $\mathcal S(y)$ and $\mathcal S(\removal)$ as a key-value pair. If we never visit a node $j\in D(C)$ with the same ground truth label as the root, then this label is not a key in the hash table and the component satisfies Condition 1 (Line 19, Algo. 2). 

Now consider the case where we visit a node $j\in D(C)$ with the same ground truth label as the root. If the label of $j$ in $\mathcal S(y)$ is not already a key in the hash table, a new entry is created (Line 15, Algo. 2). Otherwise, the value corresponding to this key is compared to the label of $j$ in $\mathcal S(\removal)$. If they differ, then the component satisfies Condition 2.

Note that pseudocode for this method is provided in Algo.~1 and~2 and our code is publicly available at \url{https://github.com/AllenNeuralDynamics/supervoxel-loss}.

\begin{thm}\label{thm:complexity-01}
    The computational complexity of computing critical components that satisfy either $\vert\mathcal S(y)\vert\neq \vert\mathcal S(\removalC)\vert$ or $\vert\mathcal S(\hat y)\vert\neq \vert\mathcal S(\additionC)\vert$ is $\mathcal O(n)$ with respect to the number of voxels in the image. (Proof is in the Appendix.)
\end{thm}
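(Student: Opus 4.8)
The plan is to show that each stage of the algorithm sketched above — precomputing the four partitions, building the lookup structures, and running the two boundary-aware breadth-first searches — costs $\mathcal O(n)$, so their sum is $\mathcal O(n)$. The one structural fact that makes this possible is that $G$ has bounded degree: the $k$-connectivity constraint forces $\deg(i)\le 26$ for every $i\in V$ (at most $8$ in 2-d, at most $26$ in 3-d), hence $|E|=\mathcal O(n)$. Throughout, every ``hash table'' may be realized as a direct-address array indexed by component label (labels are integers in $\{1,\dots,n\}$), so each lookup and insertion is $\mathcal O(1)$ in the worst case and the argument stays deterministic.

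\textbf{Step 1 (precomputation).} Computing $\mathcal S(y)$ is a single traversal of $G$ and costs $\mathcal O(|V|+|E|)=\mathcal O(n)$; the same holds for $\mathcal S(\removal)$, $\mathcal S(\hat y)$, and $\mathcal S(\addition)$, since each is the connected-component decomposition of a vertex relabeling of the same graph. Recording, for each $i\in V$, its component label in $\mathcal S(y)$ and in $\mathcal S(\removal)$ — and, on the false-positive side, in $\mathcal S(\hat y)$ and $\mathcal S(\addition)$ — takes $\mathcal O(n)$ and provides $\mathcal O(1)$ access to the key--value pairs used in Corollary~\ref{cor:bfs}.

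\textbf{Step 2 (the boundary-aware BFS).} We then enumerate the components of $\mathcal S_y(\hat y_\shortminus)$: starting from each unvisited $r\in F(\hat y_\shortminus)$, a BFS explores the equivalence class of $r$, traversing only edges whose endpoints satisfy conditions (i)--(ii), so condition (iii) holds automatically along the BFS tree. Each vertex of $F(\hat y_\shortminus)$ is dequeued once; processing it scans its $\le 26$ incident edges, and whenever an edge leads to a boundary vertex $j\in D(C)$ we run the $\mathcal O(1)$ test of Conditions~1 and~2 from Corollary~\ref{cor:bfs} (is $j$'s $\mathcal S(y)$-label already the key stored for the root? if so, does its $\mathcal S(\removal)$-label match the stored value?). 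Since the components $C$ partition $F(\hat y_\shortminus)$ and the searches run over disjoint vertex sets, the total work is at most $\sum_{C}\sum_{i\in C}\deg(i)\le 26\,|F(\hat y_\shortminus)|\le 26\,n=\mathcal O(n)$; this same bound covers all visits to boundary vertices, because $|D(C)|\le|N(C)|\le\sum_{i\in C}\deg(i)$ and the $C$'s are disjoint, even though a single $j\in D(C)$ may be examined once per edge crossing into $C$. Running the identical procedure on $F(\hat y_+)$ with $\mathcal S(\hat y)$ and $\mathcal S(\addition)$ in place of $\mathcal S(y)$ and $\mathcal S(\removal)$ detects the positively critical components, again in $\mathcal O(n)$.

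Adding the $\mathcal O(n)$ costs of the constantly many stages yields the claimed bound. The step I expect to be the real obstacle is the accounting in Step~2: one must argue that re-examining shared boundary vertices and recomputing \emph{nothing} inside each $N(C)$ (which is precisely what Corollary~\ref{cor:bfs} buys us over the naive use of $\mathcal S(y\cap N(C))$) genuinely eliminates the $\mathcal O(n^2)$ term, leaving only a charge of $\deg(i)\le 26$ to each voxel $i$. Everything else is the standard linear-time analysis of BFS on a bounded-degree graph together with $\mathcal O(1)$ dictionary operations.
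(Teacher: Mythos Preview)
Your proposal is correct and follows essentially the same approach as the paper's proof: precompute the four connected-component partitions in linear time, then run a BFS over the false-negative and false-positive masks while applying the $\mathcal O(1)$ boundary test supplied by Corollary~\ref{cor:bfs}. Your version is more carefully argued than the paper's terse appendix proof---in particular, you make explicit the bounded-degree property of the voxel graph and give a precise accounting of the shared-boundary work---but the structure and key ideas are the same.
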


We emphasize that the statements and algorithms surrounding Theorem \ref{thm:complexity-01} are restricted to tree-structured objects (i.e. critical components that satisfy $\vert\mathcal S(y)\vert\neq \vert\mathcal S(\removalC)\vert$ or $\vert\mathcal S(\hat y)\vert\neq \vert\mathcal S(\additionC)\vert$). Indeed, a similar algorithm based on the main definitions and deductions can be implemented in a straightforward manner for the general case, except that this algorithm will be super-linear in complexity. 
\begin{figure}[htbp!]
    \centering
    \includegraphics[width=0.95\columnwidth]{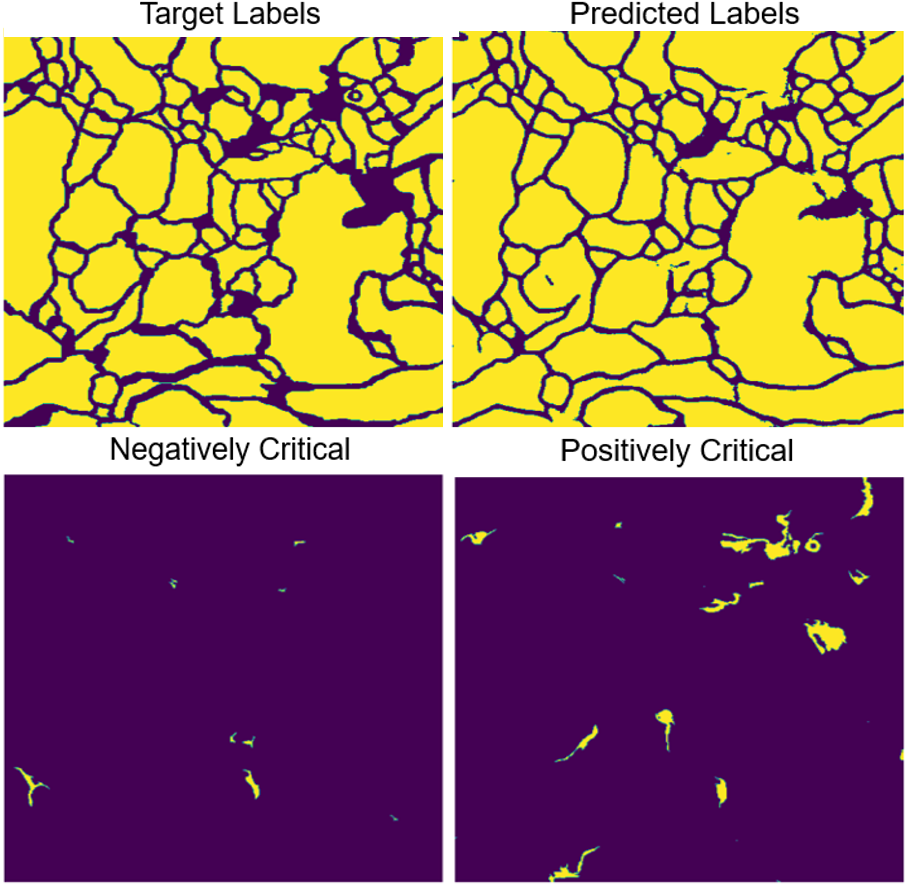}
    \caption{{\bf Segmentation of a 512x512 image from the ISBI12 dataset.} The critical components of this prediction were computed in 1.41 seconds using code at https://github.com/AllenNeuralDynamics/supervoxel-loss.}
    \label{fig:runtime}
\end{figure}

\subsection{Penalizing Critical Topological Mistakes}
Our topological loss function builds upon classical, voxel-based loss functions by adding terms that penalize critical components. The paradigm shift here is to evaluate each mistake at a ``structure level'' that transcends rectilinear geometry, as opposed to the voxel level, without resorting to expensive search iterations. In standard loss functions, mistakes are detected at the voxel-level by directly comparing the prediction at each voxel against the ground truth. Instead, we consider the context of each mistake by determining whether a given supervoxel causes a critical topological mistake.

The hyperparameter $\alpha\in[0,1]$ is a scaling factor that controls the relative importance of voxel-level versus structure-level mistakes. Similarly, $\beta\in[0,1]$ controls the weight placed on split versus merge mistakes. In situations where false merges are more costly or time-consuming to correct manually, $\beta$ can be set to a value greater than 0.5 to prioritize preventing merge errors by assigning them higher penalties.

Our topological loss function is architecture agnostic and can be easily integrated into existing deep learning pipelines. We recommend training a baseline model with a standard loss function, then fine-tuning with the topological loss function. This topological function adds little computational overhead since the only additional calculation is to compute the critical components. In Theorem \ref{thm:complexity-01}, we prove that Algorithms~1 and~2 can be used to compute critical components in linear time. This result can then be used to show that the computational complexity of computing our proposed topological loss function is also $\mathcal O(n)$ in the number of voxels in the image.

\section{Experiments}
\label{sec:results}

We evaluate our method on the following image segmentation datasets: \textbf{DRIVE}, \textbf{ISBI12}, \textbf{CrackTree} and \textbf{EXASPIM}\footnote{Download at s3://aind-msma-morphology-data/EXASPIM25}. DRIVE is a retinal vessel dataset consisting of 20 images with dimensions 584x565~\citep{staal04}. ISBI12 is an electron microscopy (EM) dataset consisting of 29 images with dimensions 512x512. CrackTree contains 206 images of cracks in roads, where the size of each image is 600x800.  EXASPIM is a 3-d light sheet microscopy dataset consisting of 37 images whose dimensions range from 256x256x256 to 1024x1024x1024 and voxel size is $\sim$\SI{1}{\micro\metre}$^3$~\citep{glaser2024}. 

\begin{figure}[H]
    \centering
    \includegraphics[width=0.9\columnwidth]{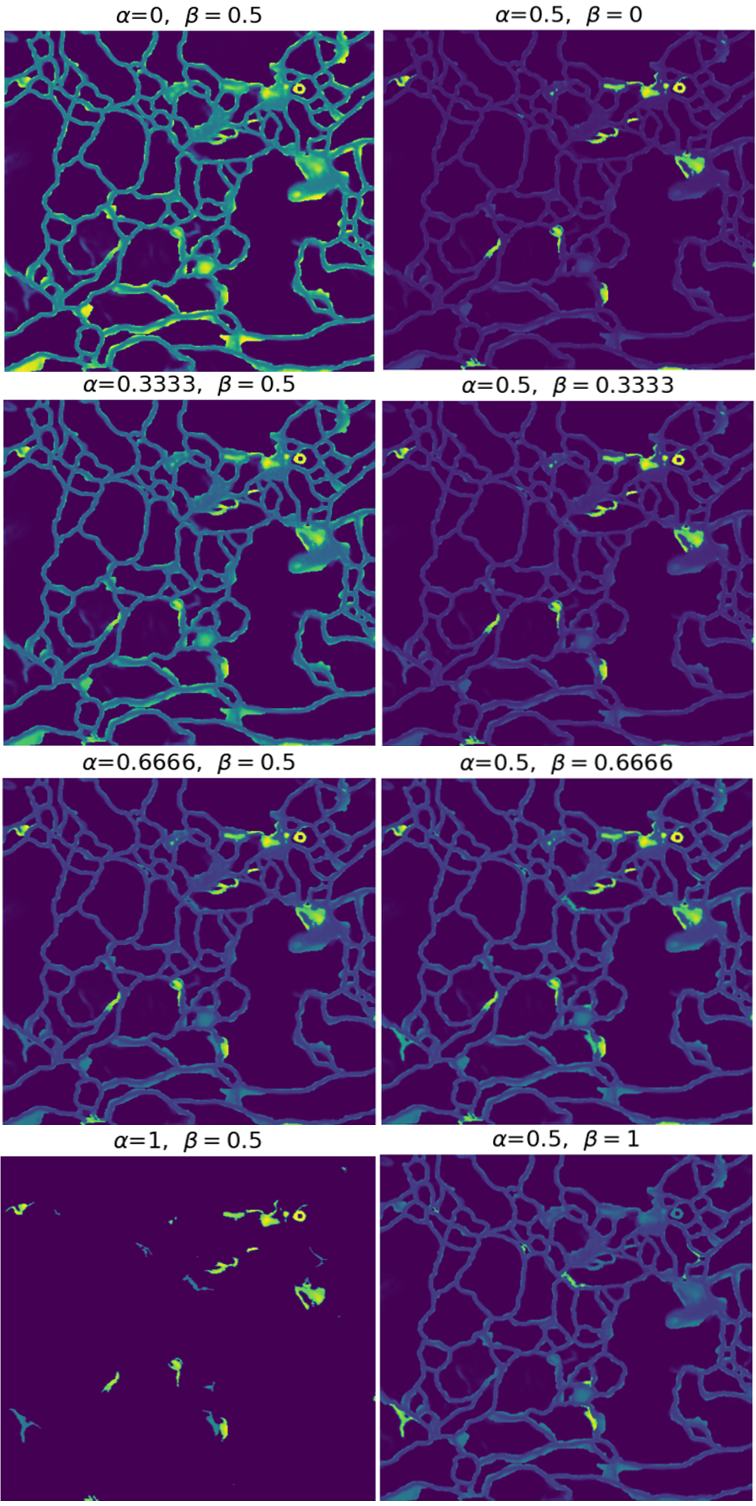}
    \caption{
    {\bf Visualization of the loss in the prediction in Fig.~\ref{fig:runtime}.} \textbf{Left:} As $\alpha$ varies from 0 to 1, the loss places higher penalties on critical components. \textbf{Right:} As $\beta$ varies from 0 to 1, the loss shifts from assigning higher penalties to negatively critical to positively critically components.}
    \label{fig:alpha-beta}
\end{figure}

For the 2-d datasets, we perform 3-fold cross-validation for each method and report the mean and standard deviation across the validation set. For the 3-d dataset, we evaluate the methods on a test set consisting of 4 images. In all experiments, we set $\alpha=0.5$ and $\beta=0.5$ in our proposed topological loss function.

{\bf Evaluation Metrics.} We use four evaluation metrics: pixel-wise accuracy, Dice coefficient, Adjusted Rand Index (ARI), Variation of Information (VOI) and Betti number error. The last three metrics are more topology-relevant in the sense that small topological differences can lead to significant changes in the error. See the Appendix for additional details on each metrics.

{\bf Baselines.} For the 2-d datasets, we compare our method to \textbf{U-Net} \cite{ronneberger2015}, \textbf{Dive} \cite{fakhry2016},  \textbf{Mosin.} \cite{mosinka2018}, \textbf{TopoLoss} \cite{hu2019}, and \textbf{DMT} \cite{hu2023learn}. For the 3-d datasets, we compare our method to \textbf{U-Net}~\cite{ronneberger2015}, \textbf{Gornet}~\cite{gornet2019reconstructing}, \textbf{clDice}~\cite{shit2021}, and \textbf{MALIS}~\cite{turaga2009maximin}. For the 2-d datasets, the segmentations were generated by applying a threshold of 0.5 to the predicted likelihoods. For the 3-d dataset, where some objects are touching, the segmentations were generated by applying a watershed algorithm to the prediction~\citep{zlateski2015}.

{\bf Results.} Table~\ref{table:common-metrics} shows quantitative results for the different models on the segmentation datasets. Our proposed method achieves state-of-the-art results; particularly, for the topologically relevant metrics. Figures~\ref{fig:qualitative_results_2d_main} and \ref{fig:qualitative_results_3d_main} (full raw images and segmentations in the Appendix) present qualitative results that visually compare the performance of U-Net models trained with cross-entropy versus our proposed loss function. Although the only difference between these two models is the addition of topological terms, there is a clear difference in topological accuracy.

\begin{figure}[t]
    \centering
    \includegraphics[width=0.95\columnwidth]{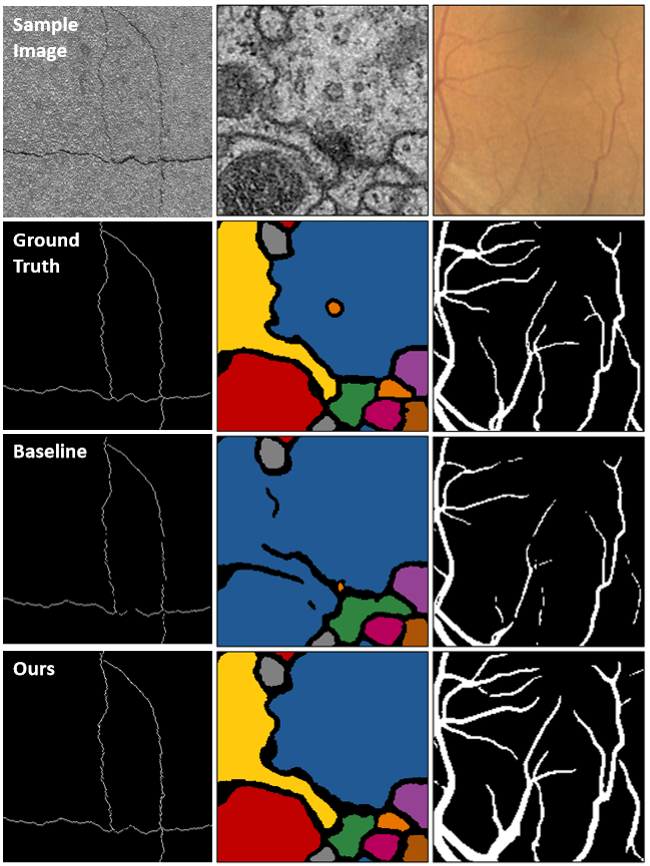}
    \caption{Qualitative results on three 2d datasets.}
    \label{fig:qualitative_results_2d_main}
\end{figure}

\begin{table}[htbp!]
    \centering
    \resizebox{\columnwidth}{!}{%
    \begin{tabular}{llllll}
    \toprule
    Method & $\mathcal O(\cdot)$ & Dice $\uparrow$ & ARI $\uparrow$ & VOI $\downarrow$  & Betti Error $\downarrow$ \\
    \midrule
    \multicolumn{6}{c}{DRIVE} \\
    \midrule
    \multirow{1}{*}{U-Net}
     & $n$ & 0.749$\pm$0.003 & 0.834$\pm$0.041 & 1.98$\pm$0.05 & 3.64$\pm$0.54 \\
    \multirow{1}{*}{DIVE} 
     & $n$ & 0.754$\pm$0.001 & 0.841$\pm$0.026 & 1.94$\pm$0.13 & 3.28$\pm$0.64\\
     \multirow{1}{*}{Mosin.}
     & $n$ & 0.722$\pm$0.001 & 0.887$\pm$0.039 & 1.17$\pm$0.03 & 2.78$\pm$0.29\\
     \multirow{1}{*}{TopoLoss}
     & $n\log n$ & 0.762$\pm$0.004 & 0.902$\pm$0.011 & 1.08$\pm$0.01 & 1.08$\pm$0.27 \\
    \multirow{1}{*}{DMT} 
     & $n^2$ & 0.773$\pm$0.004 & 0.902$\pm$0.002 & 0.88$\pm$0.04 & \textbf{0.87$\pm$0.40} \\
    \multirow{1}{*}{\textbf{Ours}} 
     & $n$ & \textbf{0.809$\pm$0.012} & \textbf{0.943$\pm$0.002} & \textbf{0.48$\pm$0.01} & 0.94$\pm$0.27\\
     \midrule
     \multicolumn{6}{c}{ISBI12} \\
     \midrule
     \multirow{1}{*}{U-Net}
     & $n$ & 0.970$\pm$0.005 & 0.934$\pm$0.007 & 1.37$\pm$0.03 & 2.79$\pm$0.27 \\
     \multirow{1}{*}{DIVE}
     & $n$ & 0.971$\pm$0.003 & 0.943$\pm$0.009 & 1.24$\pm$0.03 & 3.19$\pm$0.31 \\
     \multirow{1}{*}{Mosin.}
     & $n$ & 0.972$\pm$0.002 & 0.931$\pm$0.005 & 0.98$\pm$0.04 & 1.24$\pm$0.25 \\
     \multirow{1}{*}{TopoLoss}
     & $n\log n$ & 0.976$\pm$0.004 & 0.944$\pm$0.008 & 0.78$\pm$0.02 & 0.43$\pm$0.10 \\
     \multirow{1}{*}{DMT}
     & $n^2$ & 0.980$\pm$0.003 & \textbf{0.953$\pm$0.005} & \textbf{0.67$\pm$0.03} & \textbf{0.39$\pm$0.11} \\
     \multirow{1}{*}{\textbf{Ours}}
     & $n$ & \textbf{0.983$\pm$0.001} & 0.934$\pm$0.001 & 0.74$\pm$0.03 & 0.48$\pm$0.02 \\
    \midrule
     \multicolumn{6}{c}{CrackTree} \\
     \midrule
     \multirow{1}{*}{U-Net}
     & $n$ & 0.649$\pm$0.003 & 0.875$\pm$0.042 & 1.63$\pm$0.10 & 1.79$\pm$0.30 \\
     \multirow{1}{*}{DIVE}
     & $n$ & 0.653$\pm$0.002 & 0.863$\pm$0.0376 & 1.57$\pm$0.08 & 1.58$\pm$0.29 \\
     \multirow{1}{*}{Mosin.}
     & $n$ & 0.653$\pm$0.001 & 0.890$\pm$0.020 & 1.11$\pm$0.06 & 1.05$\pm$0.21 \\
     \multirow{1}{*}{TopoLoss}
     & $n\log n$ & 0.673$\pm$0.004 & 0.929$\pm$0.012 & 0.99$\pm$0.01 & 0.67$\pm$0.18 \\
     \multirow{1}{*}{DMT}
     & $n^2$ & \textbf{0.681$\pm$0.005} & \textbf{0.931$\pm$0.017} & \textbf{0.90$\pm$0.08} & 0.52$\pm$0.19 \\
     \multirow{1}{*}{\textbf{Ours}}
     & $n$ & 0.667$\pm$0.010 & 0.914$\pm$0.011 & 0.98$\pm$0.10 & \textbf{0.51$\pm$0.06} \\
    \midrule
    \multicolumn{6}{c}{EXASPIM} \\
    \midrule
     \multirow{1}{*}{U-Net} 
     & $n$ & 0.751$\pm$0.047  & 0.875$\pm$0.082 & 1.28$\pm$0.46 & 0.74$\pm$0.03 \\
     \multirow{1}{*}{Gornet} 
     & $n^2$ & 0.777$\pm$0.083 & 0.901$\pm$0.049 & 0.65$\pm$0.17 & 0.42$\pm$0.07\\
     \multirow{1}{*}{clDice} 
     & $kn$ & 0.785$\pm$0.032 & 0.923$\pm$0.071 & 0.66$\pm$0.51 & 0.36$\pm$0.07 \\
     \multirow{1}{*}{MALIS} 
     & $n^2$ & \textbf{0.794$\pm$0.052} & 0.927$\pm$0.042 & 0.64$\pm$0.27 & 0.34$\pm$0.08\\
     \multirow{1}{*}{\textbf{Ours}} 
     & $n$ & 0.770$\pm$0.058  & \textbf{0.953$\pm$0.038} & \textbf{0.42$\pm$0.21} & \textbf{0.31$\pm$0.06} \\
     \bottomrule
    \end{tabular}
    }
    \caption{{\bf Quantitative results for different models on several datasets.} Results for Dive, Mosin., TopoLoss, and DMT are from~\citep{hu2023learn}. $\mathcal O(\cdot)$: complexity of training iterations, $n$: number of pixels/voxels, $k$: number of pooling operations in clDice.}
    \label{table:common-metrics}
\end{table}

{\bf Evaluation with Skeleton-Based Metrics. }Although the performance of segmentation models is typically evaluated using voxel-based metrics, such as those in Table \ref{table:common-metrics}, skeleton-based metrics are better suited to evaluate neuron segmentations. This is because the main objective of this task is to reconstruct the morphology of individual neurons and their interconnectivity. Thus, skeleton-based metrics are preferable because they quantify to what extent the topological structure of a neuron was accurately reconstructed.

In this evaluation, we compare a set of ground truth skeletons to the predicted segmentations to compute the following metrics: number of splits/neuron (Splits$\slash$Neuron), edge accuracy, and normalized expected run length (ERL) (see the Appendix for definitions). The number of merges per neuron was also computed, but the watershed parameters were configured to prevent merges in the segmentations.

Table \ref{table:skeleton-based-metrics} presents quantitative results for the different models on the EXASPIM dataset. These results show that our proposed method achieves the highest topological accuracy. In particular, our proposed method has the best normalized ERL, which is regarded as the gold standard for evaluating the accuracy of a neuron's topological reconstruction~\citep{jan2018}. Perhaps as importantly, our method scales linearly and achieves favorable runtimes, consistent with Thm. 3 (see Tables \ref{table:skeleton-based-metrics} and \ref{table:full-runtimes}).

\begin{table*}[htbp!]
    \centering
    \resizebox{0.85\textwidth}{!}{%
    \begin{tabular}{lllllll}
    \toprule
    Method & Complexity \hspace{0.5mm} & Runtime$\slash$Epoch $\downarrow$ & Splits$\slash$Neuron $\downarrow$ \hspace{0.5mm}  & Edge Accuracy $\uparrow$ \hspace{0.5mm} & Normalized ERL $\uparrow$ \\
    \midrule
    \multirow{1}{*}{U-Net} & $\mathcal O(n)$ & 10.03$\pm$0.23 sec & 9.86$\pm$13.30\hspace{1mm} & 0.873$\pm$0.087 & 0.596$\pm$0.232 \\
    \multirow{1}{*}{Gornet} & $\mathcal O(n^2)$ & 71.62$\pm$1.83 sec & 3.85$\pm$2.58 & 0.937$\pm$0.062 & 0.664$\pm$0.106\\
    \multirow{1}{*}{clDice} & $\mathcal O(kn)$ & 48.55$\pm$1.60 sec & 3.39$\pm$1.52 & 0.911$\pm$0.042 & 0.701$\pm$0.091\\
    \multirow{1}{*}{MALIS} & $\mathcal O(n^2)$ & 50.68$\pm$1.58 sec & 3.33$\pm$0.59 & 0.917$\pm$0.053 & 0.719$\pm$0.098 \\
    \multirow{1}{*}{\textbf{Ours}} & $\mathcal O(n)$ & 20.12$\pm$1.15 sec & \textbf{2.63$\pm$1.36} & \textbf{0.944$\pm$0.043} & \textbf{0.784$\pm$0.099} \\
    \bottomrule
    \end{tabular}
    }
    \caption{Skeleton-based metrics for different models for the EXASPIM dataset.}
    \label{table:skeleton-based-metrics}
\end{table*}

    \begin{figure*}[htbp!]
        \centering
        \includegraphics[width=0.9\textwidth]{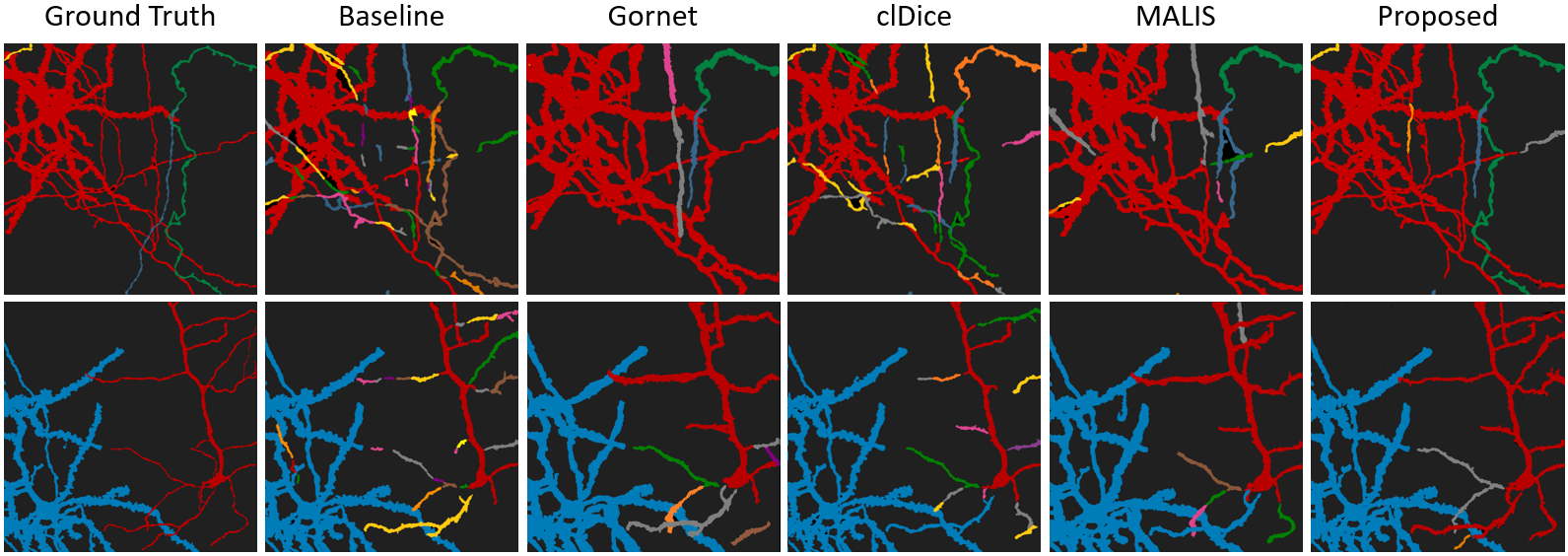}
        \caption{Qualitative results on the 3-d EXASPIM dataset. Raw Image: Fig.~\ref{fig:raw_img}}
        \label{fig:qualitative_results_3d_main}
    \end{figure*}

{\bf Ablation Study of Hyperparameters.} The proposed topological loss function has two interpretable hyperparameters: $\alpha$ and $\beta$. Generally, setting $\alpha$ to a positive value improves the topological accuracy of the segmentation. To better understand their impact on model performance, we performed hyperparameter tuning using Bayesian optimization with the adapted rand index as the objective function~\citep{optuna2019}. We then conducted two separate experiments, where we trained a U-Net using our proposed loss function on both the ISBI12 and EXASPIM datasets.

\begin{figure}[H]
    \centering
    \includegraphics[width=\columnwidth]{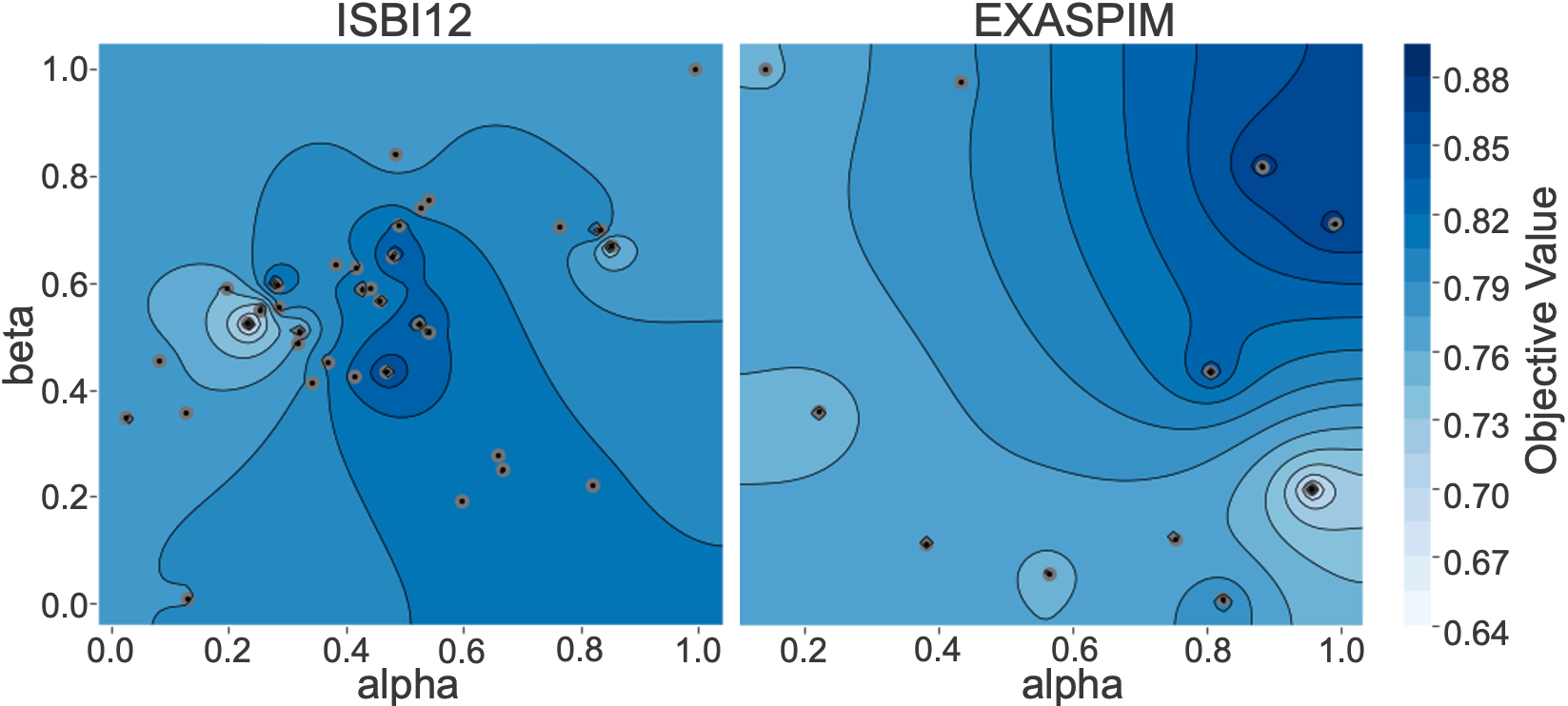}
    \caption{Contour plots of objective functions from hyperparameter optimization, each point is a trial outcome.}
    \label{fig:optuna}
\end{figure}
Although both datasets consist of neuronal images, they are markedly different: the EXASPIM dataset contains sparse images, while ISBI12 consists of dense ones. Our experiments show that $\alpha\approx0.5$ and $\beta\approx0.5$ are optimal for ISBI12, whereas $\alpha\approx0.9$ and $\beta\approx0.8$ are optimal for EXASPIM (Figure~\ref{fig:optuna}). A rule of thumb is to set $\alpha,\beta\approx 0.5$ for dense images and increase them together as the image becomes sparser. Notably, Figure~\ref{fig:optuna} suggests that careful hyperparameter tuning is not critical in practice. In fact, all the results obtained by our method were achieved with a fixed choice for $\alpha$ and $\beta$, without using hyperparameter optimization.

\section{Conclusion}

Mistakes that change the connectivity of objects (i.e. topological mistakes) are a key problem in instance segmentation. They produce qualitatively different results based on relatively few pixels/voxels, making it a challenge to avoid such mistakes with voxel-level objectives. Existing work on topology-aware segmentation typically requires costly steps to guide the segmentation towards correct decisions, among other problems.

Here, we generalize the concept of simple voxel to connected components of arbitrary shape, and propose a topology-aware method with minimal computational overhead. We demonstrate the effectiveness of our approach on multiple datasets with different resolutions, dimensionality, and image characteristics. Across multiple metrics, our method achieves state-of-the-art results. It is now possible to image not only a local patch of neuronal morphology, but whole arbors, which can reach multiple, distant brain regions. The favorable scalability of our approach will enable efficient analysis of such large-scale datasets. 

\bibliography{aaai25}

\appendix

\section{Characterizations of Critical Components}\label{app:char}

Given a graph $G=(V,E)$ and vertex labeling $y$, let $\mathcal S(y\cap N(C))=\{A_1,\ldots, A_m\}$ be the set of connected components induced by this labeling with $A_i\subseteq F(y)$. Let $G^\prime=(V^\prime,E^\prime)$ be the connected component subgraph with $V^\prime=\bigcup_{i=1}^m A_i$ and $E^\prime=\bigcup_{i=1}^mE[A_i]$ where $E[A_i]=\{\{j,k\} \in E: j,k\in A_i\}$. Given a subset $U\subseteq V^\prime$, the subgraph induced by this set is denoted by $G^\prime[U]=(U, E[U])$.

Let $\mu:\mathscr P(G)\rightarrow\mathbb N$ be a function that counts the number of connected components in $G$, where $\mathscr P(G)$ is the set of all subgraphs of $G$. Given $G_1=(V_1, E_1)$ and $G_2=(V_2, E_2)$, a disjoint union of these graphs is defined as $G_1\cup G_2=(V_1\cup V_2, E_1\cup E_2)$.


\begin{lemma}\label{lemma:mu_properties}
    $\mu$ has the following properties:
    \begin{enumerate}[(i), itemsep=0.5ex]
        \item $\mu(G[\varnothing]) = 0$.
        \item Non-negativity. $\mu(G[U])\geq0$ for all $U\subseteq V$.
        \item Finite additivity. For any collection $\{U_i\}^n_{i=1}$ of pairwise disjoint sets with $U_i\subseteq V$,
            \begin{equation*}
\mu\Big(\bigcup_{i=1}^nG[U_i]\Big)=\sum_{i=1}\mu(G[U_i]).
            \end{equation*}
    \end{enumerate}
\end{lemma}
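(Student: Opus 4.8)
The plan is to verify the three properties directly from the definition of $\mu$ as the number of connected components of a graph, i.e. the number of equivalence classes of the reachability relation on its vertex set (as described earlier in the paper). Properties (i) and (ii) require essentially no work: $G[\varnothing]$ has empty vertex set, so the reachability relation is empty and has no equivalence classes, giving $\mu(G[\varnothing])=0$; and for any $U\subseteq V$, $\mu(G[U])$ is by definition the cardinality of a finite set of equivalence classes, hence a non-negative integer. So the only substantive part is the finite additivity in (iii). Note also that each graph appearing — every $G[U_i]$ and their disjoint union — is a subgraph of $G$, so $\mu$ is well-defined on all of them.

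For (iii), I would argue by induction on $n$. The case $n=1$ is immediate, and once the case $n=2$ is established, the inductive step follows by applying it to the pair $\big(\bigcup_{i=1}^{n-1}G[U_i],\ G[U_n]\big)$, observing that $\bigcup_{i=1}^{n-1}U_i$ is disjoint from $U_n$ and that $\bigcup_{i=1}^{n-1}G[U_i]$ is again a subgraph of the required form, so the inductive hypothesis applies to it. Thus everything reduces to showing $\mu\big(G[U_1]\cup G[U_2]\big)=\mu(G[U_1])+\mu(G[U_2])$ whenever $U_1\cap U_2=\varnothing$.

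The key step is the observation that the disjoint union $G[U_1]\cup G[U_2]=(U_1\cup U_2,\ E[U_1]\cup E[U_2])$ has no edge joining a vertex of $U_1$ to a vertex of $U_2$: every edge of it lies in $E[U_1]$ or in $E[U_2]$, and by definition of the induced edge set each such edge has both endpoints inside a single $U_i$. Consequently no path in $G[U_1]\cup G[U_2]$ crosses between $U_1$ and $U_2$, so each connected component of the union is contained entirely in $U_1$ or entirely in $U_2$. Moreover, since $E[U_i]$ is exactly the edge set of $G[U_i]$, a vertex subset of $U_i$ is a connected component of $G[U_1]\cup G[U_2]$ if and only if it is a connected component of $G[U_i]$. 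Hence the connected components of the union are precisely the disjoint collection consisting of those of $G[U_1]$ together with those of $G[U_2]$, and taking cardinalities yields the claim. The only place requiring any care is this ``no crossing edges, so components refine along the partition'' argument, and even there the work amounts to unwinding the definition of an induced subgraph; I do not anticipate a genuine obstacle.
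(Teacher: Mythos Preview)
Your proof is correct and follows the same approach as the paper: properties (i) and (ii) are immediate from the definition, and (iii) is handled by observing that a disjoint union of induced subgraphs has no edges between the pieces, together with an induction on $n$. You simply spell out in detail what the paper compresses into the phrase ``using basic set operations in an inductive argument.''
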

\begin{proof}
    $\mu(G[\varnothing])=0$ because the empty set does not contain any vertices. This function is non-negative by the definition of connected components. For finite additivity, a disjoint union over pairwise disjoint graphs does not affect the connectivity among vertices. Thus, this property holds by using basic set operations in an inductive argument. 
\end{proof}

\subsection{General Case}\label{app:char_general}

\begin{lemma}\label{lemma:subset_identity}
    Given a component $C\in\mathcal S_y(\hat y_\shortminus)$, there exists a unique $A\in\mathcal S(y\cap N(C))$ such that $C\subseteq A$.
\end{lemma}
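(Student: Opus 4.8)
The plan is to split the claim into existence and uniqueness. Both follow once we record two facts: $\mathcal S(y\cap N(C))$ is a partition of $F(y)\cap N(C)$ into maximal monochromatic connected pieces (it consists of the equivalence classes of an equivalence relation), and $C$ is itself a nonempty monochromatic connected subset of $F(y)\cap N(C)$. So the whole argument reduces to checking these properties of $C$ carefully against the definitions.

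I would first verify three properties of $C$. (a) $C\subseteq F(y)$: every vertex $i$ of the false negative mask has $y_i\neq 0$ (it is foreground in the ground truth but background in the prediction), so $F(\hat y_\shortminus)\subseteq F(y)$ and hence $C\subseteq F(y)$. (b) $C\subseteq N(C)$: this is the closed-neighborhood convention used throughout the paper, consistent with the later definition $D(C)=N(C)\setminus C$; for $|C|\ge 2$ it is automatic, since connectedness of $C$ gives every $v\in C$ a neighbor in $C$ and hence places $v$ in $N(C)$. (c) there is a single label $\ell\neq 0$ with $y_i=\ell$ for all $i\in C$, and the subgraph induced by $C$ is connected: conditions (ii) and (iii) defining $\mathcal S_y(\hat y_\shortminus)$ say precisely that any two vertices of $C$ are joined by a path lying in $F(\hat y_\shortminus)$ within a single ground-truth segment, i.e. a monochromatic path contained in $C$.

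For existence, (a) and (b) give $C\subseteq F(y)\cap N(C)$. Pick $v\in C$ (nonempty, being an equivalence class) and let $A\in\mathcal S(y\cap N(C))$ be the component containing $v$. For any $w\in C$, property (c) supplies a path from $v$ to $w$ inside $C\subseteq N(C)$ all of whose vertices carry the label $\ell$; this path witnesses that $w$ and $v$ lie in the same component of $y\cap N(C)$, so $w\in A$. Since $w$ was arbitrary, $C\subseteq A$.

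For uniqueness, the elements of $\mathcal S(y\cap N(C))$ are the equivalence classes of a relation on $F(y)\cap N(C)$ and hence pairwise disjoint, so if $C\subseteq A$ and $C\subseteq A'$ then any $v\in C$ lies in $A\cap A'$, forcing $A=A'$. The only genuine subtlety is the reading $C\subseteq N(C)$ (closed versus open neighborhood); the remainder is bookkeeping with the definitions of the false negative mask and of connected components.
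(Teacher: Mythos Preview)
Your proof is correct and follows essentially the same approach as the paper: both arguments hinge on the fact that $C$ is a nonempty connected monochromatic subset of $F(y)\cap N(C)$, and that the elements of $\mathcal S(y\cap N(C))$ partition this foreground, so $C$ must lie in exactly one of them. The paper phrases uniqueness as a contradiction (two components meeting $C$ would be joined via $C$), whereas you argue directly by picking a basepoint and using paths in $C$; you are also more explicit than the paper about the subtlety $C\subseteq N(C)$, which the paper's open-neighborhood definition does not literally guarantee for singleton $C$.
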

\begin{proof}
    Choose any $j\in C$, then $y_j\neq0$ by the definition of the false negative mask. Given that $j\in F(y)$, this implies that there exists some $A_k\in\mathcal S(y)$ with $j\in A_k$ and so $C\subseteq \cup_{i=1}^m A_i$. Using this inclusion, $C$ can be decomposed as
    \begin{equation*}
        C = C\cap\bigcup_{i=1}^m A_i= \bigcup_{i=1}^m C\cap A_i.
    \end{equation*}
    This collection of sets is pairwise disjoint since $\{A_i\}_{i=1}^m$ is a collection of connected components.
    
    Next, we claim that there exists a unique $A_k\in\mathcal S(y)$ such that $C\cap A_k\neq\varnothing$. By contradiction, suppose there exists a distinct $A_\ell\in\mathcal S(y)$ with $C\cap A_\ell\neq\varnothing$. But this assumption implies the existence of a path between $A_k$ and $A_\ell$ via $C$ because $C\subseteq F(y\cap N(C))$ implies this set is connected and $C\cap A_k\neq\varnothing$ and $C\cap A_\ell\neq\varnothing$. Since this contradicts $A_k$ and $A_\ell$ being disjoint, $A_k$ must be unique. Lastly, we can use this uniqueness property to conclude that
    \begin{equation*}
        C= \bigcup_{i=1}^m C\cap A_i = C\cap A_k
    \end{equation*}
    which implies that $C\subseteq A_k$.
\end{proof}

\setcounter{thm}{0}
\begin{thm}
    A component $C\in\mathcal S_y(\hat y_\shortminus)$ is negatively critical if and only if there exists an $A\in\mathcal S(y\cap N(C))$ with $A\supseteq C$ such that either $(1)$ $A=C$ or $(2)$ $\exists\, v_0, v_k\in A\setminus C$ such that there does not exist a path $(v_0,\ldots, v_k)\subseteq N(C)$ with $v_i\notin C$ for $i=1,\ldots, k-1$.
\end{thm}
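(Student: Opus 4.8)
The plan is to prove both directions of the biconditional by relating the change in the connected-component count on $N(C)$ to the local structure of the unique component $A \in \mathcal S(y \cap N(C))$ containing $C$, whose existence and uniqueness are guaranteed by Lemma~\ref{lemma:subset_identity}. The key observation is that removing $C$ only affects connectivity inside $A$: the other components $A_i \in \mathcal S(y\cap N(C))$ with $A_i \neq A$ are disjoint from $C$ and hence untouched, so by finite additivity of $\mu$ (Lemma~\ref{lemma:mu_properties}(iii)) the difference $\vert\mathcal S(y\cap N(C))\vert - \vert\mathcal S((y\ominus C)\cap N(C))\vert$ equals $\mu(G'[A]) - \mu(G'[A\setminus C]) = 1 - \mu(G'[A\setminus C])$. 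So $C$ is negatively critical if and only if $\mu(G'[A \setminus C]) \neq 1$, i.e.\ $A \setminus C$ is either empty or disconnected (in the subgraph induced on $N(C)$, using only vertices outside $C$).

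For the forward direction, suppose $C$ is negatively critical. Then $\mu(G'[A\setminus C]) \neq 1$. If $A \setminus C = \varnothing$ then $A = C$, which is case $(1)$. Otherwise $\mu(G'[A\setminus C]) \geq 2$, so $A \setminus C$ splits into at least two connected pieces; pick $v_0$ in one piece and $v_k$ in another. Any path in $N(C)$ from $v_0$ to $v_k$ that avoids $C$ would stay inside $F(y\cap N(C))$ and inside $A$ (since $A$ is a connected component and the endpoints lie in $A$), hence would witness that $v_0, v_k$ are in the same piece of $G'[A\setminus C]$ — a contradiction. So no such path exists, which is case $(2)$.

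For the converse, suppose case $(1)$ or $(2)$ holds for the unique $A \supseteq C$. In case $(1)$, $A = C$ gives $A\setminus C = \varnothing$, so $\mu(G'[A\setminus C]) = 0 \neq 1$ and $C$ is negatively critical. In case $(2)$, the vertices $v_0, v_k \in A\setminus C$ cannot be joined by a $C$-avoiding path in $N(C)$; since all of $A$'s internal edges lie in $N(C)$ (as $A \subseteq N(C)$) and removing $C$ from $A$ still leaves $v_0, v_k$ present, they lie in distinct components of $G'[A\setminus C]$, so $\mu(G'[A\setminus C]) \geq 2 \neq 1$, and again $C$ is negatively critical. Combining the two directions via the additivity reduction completes the proof.

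The main obstacle I anticipate is the bookkeeping around exactly which graph the paths and components live in: one must be careful that $A \subseteq N(C)$ so that every edge internal to $A$ is available in $G'$, that "path in $N(C)$ avoiding $C$" matches precisely "path in $G'[A\setminus C]$" once the endpoints are pinned to $A$, and that foreground/label constraints ($A_i \subseteq F(y)$) are consistently maintained when invoking Lemma~\ref{lemma:mu_properties}. Once the reduction $\vert\mathcal S(y\cap N(C))\vert \neq \vert\mathcal S((y\ominus C)\cap N(C))\vert \iff \mu(G'[A\setminus C]) \neq 1$ is established cleanly, both implications are short.
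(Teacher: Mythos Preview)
Your proposal is correct and follows essentially the same approach as the paper: both arguments use Lemma~\ref{lemma:subset_identity} to isolate the unique $A\supseteq C$, invoke finite additivity (Lemma~\ref{lemma:mu_properties}) to derive the identity $\vert\mathcal S((y\ominus C)\cap N(C))\vert = \vert\mathcal S(y\cap N(C))\vert - 1 + \mu(G'[A\setminus C])$, and then read off the two cases from $\mu(G'[A\setminus C])=0$ versus $\mu(G'[A\setminus C])\geq 2$. Your version is slightly more streamlined in that you state the reduction ``negatively critical $\iff \mu(G'[A\setminus C])\neq 1$'' once and then handle both directions symmetrically, whereas the paper splits the forward direction into the $>$ and $<$ cases before arriving at the same identity; the content is the same.
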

\begin{proof}
    ($\Rightarrow$) Consider the case when $C\in\mathcal S_y(\hat y_\shortminus)$ is negatively critical due to $\vert\mathcal S(y\cap N(C))\vert>\vert\mathcal S((\removalC)\cap N(C))\vert$. Suppose that $\mathcal S(y\cap N(C))=\{A_1, \ldots, A_m\}$, then starting from Equation \ref{eq:remove_component} leads to the identity
    \begin{align}\label{eq:additive}
        \vert\mathcal S((\removalC)\cap N(C))\vert
        &= \mu\Big(\bigcup_{i=1}^m G^\prime[A_i\setminus C] \Big)\nonumber \\
        &= \sum_{i=1}^m\mu(G^\prime[A_i\setminus C]) \nonumber \\
        &= \sum_{\substack{i=1 \\ i\neq j}}^m\mu(G^\prime[A_i]) + \mu(G^\prime[A_j\setminus C])\nonumber\\
        &=\vert\mathcal S(y\cap N(C))\vert-1 \nonumber \\
        &\hspace{4mm}+\mu(G^\prime[A_j\setminus C])
    \end{align}        
    where the second equality holds by $\mu$ being a finitely additive function defined over a collection of pairwise disjoint sets by Lemma \ref{lemma:mu_properties}. The third equality holds by using that there exists a unique $A_j\in\mathcal S(y)$ such that $C\subseteq A_j$ by Lemma \ref{lemma:subset_identity}. Under the assumption that $\vert\mathcal S(y\cap N(C))\vert>\vert\mathcal S((\removalC)\cap N(C))\vert$, it must be the case that $\mu(G^\prime[A_j\setminus C])=0$.
    Thus, we have that $A_j\setminus C=\varnothing$ which implies $A_j\subseteq C$ and so $A_j=C$.

    Next consider the case when $C\in\mathcal S_y(\hat y_\shortminus)$ is negatively critical due to $\vert\mathcal S(y\cap N(C))\vert<\vert\mathcal S((\removalC)\cap N(C))\vert$. Again using the identity in Equation \ref{eq:additive}, the assumed inequality implies that $\mu(G^\prime[A_j\setminus C])\geq2$ and so $G^\prime[A_j\setminus C]$ must contain at least two connected components. Thus, this set can be decomposed into connected components such that
    \begin{equation*}
        A_j\setminus C=\bigcup^K_{k=1}B_k\subseteq \mathcal S((\removalC)\cap N(C))
    \end{equation*}
    with $K\geq2$. For any $v_0\in B_1$ and $v_k\in B_2$, it is impossible to construct a path between these vertices that does not pass through $C$. Otherwise, this would imply that $B_1$ and $B_2$ are path-connected in the graph $G^\prime [A_j\setminus C]$ and not distinct connected components. 

    ($\Leftarrow$) Assume that Condition 1 holds, then $\exists A_j\in\mathcal S(y\cap N(C))$ such that $A_j=C$. The follows immediately by
    \begin{align*}
        \vert\mathcal S((\removalC)\cap N(C))\vert 
        &= \vert\mathcal S(y\cap N(C))\vert-1+\mu(G^\prime[A_j\setminus C]) \\
        &= \vert\mathcal S(y\cap N(C))\vert-1+\mu(G^\prime[C\setminus C]) \\
        &= \vert\mathcal S(y\cap N(C))\vert-1+\mu(G^\prime[\varnothing]) \\
        &= \vert\mathcal S(y\cap N(C))\vert-1
    \end{align*}
    \begin{equation*}
        \implies \vert\mathcal S((\removalC)\cap N(C))\vert < \vert\mathcal S(y\cap N(C))\vert.
    \end{equation*}
    Now assume that Condition 2 holds, then there exists distinct components $B_1, B_2\in\mathcal S((\removalC)\cap N(C))$ with $B_1, B_2\subset A\setminus C$ such that $v_0\in B_1$ and $v_k\in B_2$. Since $B_1, B_2\subset A\setminus C$ are distinct components in the graph $G^\prime[A\setminus C]$, the final result holds by
    \begin{align*}
        \vert\mathcal S((\removalC)\cap N(C))\vert 
        &= \vert\mathcal S(y\cap N(C))\vert-1+\mu(G^\prime[A_j\setminus C]) \\
        &\geq \vert\mathcal S(y\cap N(C))\vert-1 \\
        &\hspace{4mm}+\mu(G^\prime[B_1]\cup G^\prime[B_2]) \\
        &= \vert\mathcal S(y\cap N(C))\vert-1 \\ &\hspace{4mm}+\mu(G^\prime[B_1]) + \mu(G^\prime[B_2]) \\
        &= \vert\mathcal S(y\cap N(C))\vert +1
    \end{align*}
    \begin{equation*}
        \implies \vert\mathcal S((\removalC)\cap N(C))\vert > \vert\mathcal S(y\cap N(C))\vert.
    \end{equation*}
\end{proof}

\subsection{Special Case}\label{app:char_special}

\begin{lemma}\label{lemma:neg_critical_special}
        A component $C\in\mathcal S_y(\hat y_\shortminus)$ is negatively critical if and only if there exists an $A\in\mathcal S(y\cap N(C))$ with $A\supseteq C$ such that either $(1)$ $A=C$ or $(2)$ $\exists\, v_0, v_k\in A\setminus C$ such that there does not exist a path $(v_0,\ldots, v_k)\subseteq N(C)$ with $v_i\notin C$ for $i=1,\ldots, k-1$.
\end{lemma}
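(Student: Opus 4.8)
The plan rests first on the observation that this statement is, word for word, the characterization already proved in Theorem~\ref{thm:neg_critical_general}, so at a minimum it follows by invoking that theorem directly. The reason to record it again in the ``Special Case'' discussion is that what follows (Corollary~\ref{cor:neg_critical_special}) needs the same characterization rephrased in terms of the \emph{global} component sets $\mathcal S(y)$ and $\mathcal S(\removalC)$ rather than the local ones $\mathcal S(y\cap N(C))$ and $\mathcal S((\removalC)\cap N(C))$. I would therefore prove it in a form whose bookkeeping transfers verbatim to the global setting.

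Concretely, I would reuse the two structural ingredients behind Theorem~\ref{thm:neg_critical_general}. By (the global analogue of) Lemma~\ref{lemma:subset_identity}, since every vertex of the connected set $C$ lies in $F(y)$, there is a unique component $A$ — in $\mathcal S(y\cap N(C))$ locally, or in $\mathcal S(y)$ globally — with $C\subseteq A$. By finite additivity of the component counter $\mu$ (Lemma~\ref{lemma:mu_properties}), deleting $C$ leaves every component other than $A$ untouched, so starting from Equation~\ref{eq:remove_component} one obtains the single identity
\[
\lvert\mathcal S(\removalC)\rvert \;=\; \lvert\mathcal S(y)\rvert - 1 + \mu\big(G[A\setminus C]\big),
\]
where $G[A\setminus C]$ is the subgraph induced on $A\setminus C$. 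The biconditional then drops out of the two-case split from Theorem~\ref{thm:neg_critical_general}: $\mu(G[A\setminus C])=0$ forces $A\setminus C=\varnothing$, i.e.\ $A=C$ (Condition~(1)); $\mu(G[A\setminus C])\ge 2$ means $A\setminus C$ breaks into at least two pieces, so any $v_0,v_k$ taken from two distinct pieces witness Condition~(2); and conversely, substituting Condition~(1) or~(2) back into the identity yields $\lvert\mathcal S(\removalC)\rvert<\lvert\mathcal S(y)\rvert$ or $\lvert\mathcal S(\removalC)\rvert>\lvert\mathcal S(y)\rvert$ respectively, so $C$ is negatively critical.

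The main obstacle — the only genuine work beyond restating Theorem~\ref{thm:neg_critical_general} — is verifying that the local condition of Definition~\ref{def:neg_critical} and the global condition $\lvert\mathcal S(y)\rvert\neq\lvert\mathcal S(\removalC)\rvert$ detect the same component $A$ and the same separation, rather than one being strictly stronger. The danger is a ``reconnection at a distance'': $A\setminus C$ could be disconnected inside $N(C)$ yet reconnected by a path leaving $N(C)$, or the reverse. This is exactly where the tree-structure hypothesis on $F(y)$ enters: since $A$ contains no cycle, the connected chunk $C$ separates a pair of vertices of $A$ globally if and only if it already separates them within $N(C)$, so no such long detour can exist. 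Once this equivalence is established, the additivity identity applies in both the local and the global form, and the two-case analysis finishes both directions.
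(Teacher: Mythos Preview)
Your proposal is correct and aligns with the paper's own proof. As you observe, the statement is verbatim Theorem~\ref{thm:neg_critical_general}, so the direct invocation suffices; the paper's proof likewise just says ``apply the same argument as Theorem~\ref{thm:neg_critical_general},'' and for the converse it too routes through the global inequality $\lvert\mathcal S(y)\rvert\neq\lvert\mathcal S(\removalC)\rvert$ before concluding that $C$ is negatively critical. Your additional paragraph on the tree-structure hypothesis and the local--global equivalence is more than the lemma itself demands (and more than the paper supplies here---it simply asserts that the global inequality implies negative criticality); that analysis really belongs to the passage to Corollary~\ref{cor:neg_critical_special}, so you may wish to trim it from this proof and deploy it there instead.
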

\begin{proof}
    The forward direction holds by applying the same argument use to prove Theorem \ref{thm:neg_critical_general}. For the converse, we can again apply the same argument to prove that $\vert \mathcal S(y)\vert\neq\vert\mathcal S(\removalC)\vert$ which then implies that $C$ is negatively critical.
\end{proof}

\begin{lemma}\label{lemma:path-to-union}
    Given a component $C\in\mathcal S_y(\hat y_\shortminus)$ and $A\in\mathcal S(y)$ with $A\supseteq C$, $\exists\, v_0, v_k\in A$ such that there does not exist a path $(v_0,\ldots, v_k)\subseteq A\setminus C$ with $v_i\notin C$ for $i=1,\ldots, k-1$ if and only if $\exists B^\prime_1,B^\prime_2\in\mathcal S(\removalC)$ with $B^\prime_1,B^\prime_2\subset A$ such that $B^\prime_1\cup C\cup B^\prime_2$. 
\end{lemma}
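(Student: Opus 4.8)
The starting point is the observation that removing $C$ from the ground truth perturbs only the component $A$. By Lemma~\ref{lemma:subset_identity}, $A$ is the unique member of $\mathcal S(y)$ containing $C$, so the operation $y\ominus C$ leaves every other component of $y$ untouched while replacing $A$ by $A\setminus C$. Since $A$ carries a single label, the members of $\mathcal S(\removalC)$ that lie inside $A$ are exactly the connected components of the induced subgraph $G[A\setminus C]$. Under this identification, ``no path $(v_0,\ldots,v_k)\subseteq A\setminus C$ joins $v_0$ and $v_k$'' is the same as ``$v_0$ and $v_k$ belong to distinct members of $\mathcal S(\removalC)$ that are subsets of $A$,'' so the lemma amounts to showing that the existence of two such distinct members is equivalent to the existence of two members $B_1^\prime,B_2^\prime\subset A$ whose union with $C$ is connected. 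The genuine content is that the connectivity of $B_1^\prime\cup C\cup B_2^\prime$ comes for free.

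For the ($\Leftarrow$) direction I would pick any $v_0\in B_1^\prime$ and $v_k\in B_2^\prime$. Since $B_1^\prime$ and $B_2^\prime$ are distinct connected components of $G[A\setminus C]$, no path contained in $A\setminus C$ can join $v_0$ to $v_k$, which is the left-hand side; the connectivity hypothesis on $B_1^\prime\cup C\cup B_2^\prime$ is not needed here.

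The ($\Rightarrow$) direction does the work. Given $v_0,v_k\in A\setminus C$ with no joining path inside $A\setminus C$, let $B_1^\prime$ and $B_2^\prime$ be the components of $G[A\setminus C]$ containing them; these are distinct, belong to $\mathcal S(\removalC)$ by the identification above, and are proper subsets of $A$ because $C\neq\varnothing$. To finish I would show $B_1^\prime\cup C\cup B_2^\prime$ is connected using that $G[A]$ is connected (as $A\in\mathcal S(y)$) together with the partition $A = C\sqcup\bigsqcup_j B_j^\prime$: an edge of $G[A]$ leaving $B_1^\prime$ cannot enter another $B_j^\prime$ without contradicting maximality of the component $B_1^\prime$, so it must enter $C$, and connectedness of $G[A]$ forces at least one such edge; thus there is an edge between $B_1^\prime$ and $C$, and similarly between $B_2^\prime$ and $C$. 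Since $C$, $B_1^\prime$, $B_2^\prime$ are each connected and $C$ meets both, $B_1^\prime\cup C$ and $B_2^\prime\cup C$ are connected and share $C$, hence $B_1^\prime\cup C\cup B_2^\prime$ is connected.

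I expect the adjacency argument in the last paragraph — that each $B_i^\prime$ necessarily touches $C$ — to be the only step requiring care; it is precisely where connectedness of $A$ and $C\subseteq A$ are used, and without it the equivalence would fail. The remaining parts are routine bookkeeping about connected components of induced subgraphs. I would also note at the outset that the statement is understood with $v_0,v_k\in A\setminus C$ and $B_1^\prime\neq B_2^\prime$, so that the path condition is non-vacuous and the two sides agree on the degenerate case $A=C$, which is treated separately as Condition~1 in Corollary~\ref{cor:neg_critical_special}.
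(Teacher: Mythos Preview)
Your argument is correct and tracks the paper's proof closely: both identify the members of $\mathcal S(\removalC)$ inside $A$ with the connected components of $G[A\setminus C]$, take $B_1',B_2'$ to be the two containing $v_0,v_k$, observe they are distinct, and then argue that each touches $C$ so that $B_1'\cup C\cup B_2'$ is connected. The only real difference is in that last step. The paper takes a concrete path $(v_0,\ldots,v_k)$ in $A$ that must pass through $C$ and, implicitly sliding $v_0$ to the last vertex before entry into $C$, reads off the edge $\{v_0,v_1\}$ with $v_1\in C$; connectedness of $B_1'\cup C\cup B_2'$ then comes from that single path. Your partition argument---any edge of $G[A]$ leaving $B_1'$ must land in $C$ by maximality of $B_1'$, and connectedness of $A$ forces such an edge---reaches the same conclusion without selecting a path or invoking an unstated without-loss-of-generality reduction on $v_0$. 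This makes your version a bit more self-contained; the paper's is terser but leans on that implicit adjustment. Your remark that the statement should be read with $v_0,v_k\in A\setminus C$ and $B_1'\neq B_2'$ is also on target: the paper's own proof uses exactly this reading.
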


\begin{proof}
    $(\Rightarrow)$ It must be the case that $\hat y_{v_0}=y_{v_0}$ since $\{v_0, v_1\}\in E$ and $v_1\in C$. This implies that $(\removalC)_{v_0}\neq0$ and so there exists some $B^\prime_1\in\mathcal S(\removalC)$ with $v_0\in B^\prime_1$. By the same argument, $(\removalC)_{v_k}\neq0$ and there exists some $B^\prime_2\in\mathcal S(\removalC)$ with $v_k\in B^\prime_2$. Moreover, $v_0$ and $v_k$ must belong to distinct component, i.e. $B^\prime_1\neq B^\prime_2$, since these nodes are not connected in the subgraph induced by $\removalC$. Lastly, $B^\prime_1\cup C\cup B^\prime_2$ is connected due to the existence of the path $(v_0, \ldots, v_k)$ from $B^\prime_1$ to $B^\prime_2$ via $C$.

    $(\Leftarrow)$ The converse holds immediately since $B^\prime_1$ and $B^\prime_2$ are disjoint by definition.
\end{proof}

\begin{lemma}\label{lemma:one-to-all}
    Given a component $C\in\mathcal S_y(\hat y_\shortminus)$ and $A\in\mathcal S(y)$ with $A\supseteq C$, $\exists B^\prime_1,B^\prime_2\in\mathcal S(\removalC)$ with $B^\prime_1,B^\prime_2\subset A$ such that $B^\prime_1\cup C\cup B^\prime_2$ is connected if and only if $\exists B_1,B_2\in\mathcal S(\removal)$ with $B_1,B_2\subset A$ such that $B_1\cup C\cup B_2$ is connected.
\end{lemma}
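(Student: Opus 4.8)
The plan is to prove both implications by transporting a suitable pair of ``bridging'' components through the one-voxel-thick shell around $C$, which both deletions leave intact. The one fact doing the real work is a preliminary observation: \emph{every vertex $v\in A$ adjacent to $C$ lies outside $F(\hat y_\shortminus)$}. Indeed, such a $v$ and a neighbour $c\in C$ have $y_v=y_c\neq0$ (both are in the $y$-component $A$) and are joined by the single edge $\{v,c\}$, so if $v$ were in $F(\hat y_\shortminus)$ the equivalence defining $\mathcal S_y(\hat y_\shortminus)$ would force $v\in C$, a contradiction. Hence $(\removal)_v=y_v\neq0$, so $v$ survives in $\removal$ and a fortiori in $\removalC$, which erases only $C\subseteq F(\hat y_\shortminus)$. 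I would establish this first, and also note that we may assume $A\supsetneq C$, since if $A=C$ then $A$ is wiped out by both deletions and neither side of the claimed equivalence can hold.

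For the direction $\removalC\Rightarrow\removal$, suppose $B_1',B_2'\in\mathcal S(\removalC)$ are distinct, contained in $A$, with $B_1'\cup C\cup B_2'$ connected. The three sets are pairwise disjoint, and there is no edge between $B_1'$ and $B_2'$ (they are distinct components of the subgraph on $F(\removalC)$), so connectedness forces an edge from $C$ to each $B_i'$; pick $u_i\in B_i'$ adjacent to $C$. By the preliminary observation $u_1,u_2$ survive in $\removal$; let $B_i$ be the element of $\mathcal S(\removal)$ containing $u_i$. Each component of $\removal$ sits inside one component of $y$, so $B_i\subseteq A$, and $B_i$ is a proper subset since it omits the nonempty set $C$. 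Because the subgraph on $F(\removal)$ is contained in the subgraph on $F(\removalC)$, separation of $u_1,u_2$ in $\mathcal S(\removalC)$ forces $B_1\neq B_2$. Finally $B_1\cup C\cup B_2$ is connected, using the edges from $u_1$ and $u_2$ into $C$ together with the connectedness of $C$. Distinctness is automatic in this direction.

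For $\removal\Rightarrow\removalC$, argue symmetrically: from distinct $B_1,B_2\in\mathcal S(\removal)$ inside $A$ with $B_1\cup C\cup B_2$ connected, extract $w_i\in B_i$ joined by an edge to some $c_i\in C$, and let $B_i'$ be the connected component of $A\setminus C$ (equivalently, the element of $\mathcal S(\removalC)$) containing $w_i$. As before, $B_i'\subset A$ and $B_1'\cup C\cup B_2'$ is connected via $c_1,c_2$ and the connectedness of $C$. The only nontrivial point is $B_1'\neq B_2'$, and this is where I expect the real work: passing from $\removal$ to $\removalC$ \emph{restores} vertices (the other false-negative components inside $A$), so $w_1$ and $w_2$ could in principle be reconnected in $\removalC$. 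I would close this gap with acyclicity of the ground-truth objects: in the tree $A$ there is a unique simple path between $w_1$ and $w_2$, and the walk that runs from $w_1$ to $c_1$, then inside $C$ from $c_1$ to $c_2$, then from $c_2$ to $w_2$ passes through $c_1\in C$; hence no $w_1$-to-$w_2$ path can avoid $C$, so $w_1,w_2$ lie in different components of $A\setminus C$ and $B_1'\neq B_2'$.

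The single genuine obstacle is exactly this distinctness in the converse direction, and it really does need the tree hypothesis: for the cycle on vertices $\{b_1,c,b_2,c''\}$ with edges $\{b_1,c\},\{c,b_2\},\{b_2,c''\},\{c'',b_1\}$ and $c,c''$ false negatives, $A\setminus C$ stays connected while $A$ with both $c,c''$ removed splits into $\{b_1\}$ and $\{b_2\}$, so the $\removal$-side holds but the $\removalC$-side fails. Everything else is routine bookkeeping with the ``shell survives'' observation and the elementary fact that erasing vertices never merges connected components.
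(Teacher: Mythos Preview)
Your argument is correct and follows the same skeleton as the paper's: both hinge on the observation that any vertex of $A$ adjacent to $C$ cannot itself be a false negative (else it would already belong to $C$), and hence survives both deletions. The paper phrases this as ``$\hat y_{v_0}=y_{v_0}$ since $\{v_0,v_1\}\in E$ and $v_1\in C$,'' then passes the endpoints $v_0,v_k$ of a path through $C$ to components on the other side, and for the converse writes only that ``the same argument applies.''

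Where you go beyond the paper is on distinctness of the two components. In the $\removalC\Rightarrow\removal$ direction you supply the clean monotonicity reason ($F(\removal)\subseteq F(\removalC)$, so vertices separated in the larger foreground stay separated in the smaller one); the paper is silent on this. In the $\removal\Rightarrow\removalC$ direction you correctly flag that restoring the other false-negative components could, a priori, reconnect $w_1$ and $w_2$ inside $A\setminus C$, and you close this with the tree hypothesis together with a four-cycle counterexample showing the hypothesis is genuinely needed. The paper never invokes acyclicity in this lemma's proof---it leans implicitly on the ambient ``tree-structured objects'' assumption stated elsewhere---so your version is in fact the more complete one. One small sharpening: your sentence ``the walk \ldots\ passes through $c_1\in C$; hence no $w_1$-to-$w_2$ path can avoid $C$'' is slightly elliptic, since a walk through $c_1$ does not by itself force the simple path through $c_1$. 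The crisp justification is that a $w_1$--$w_2$ path avoiding $C$, concatenated with the walk $w_1,c_1,\ldots,c_2,w_2$, would give a closed walk in the tree $A$ traversing the edge $\{w_1,c_1\}$ exactly once.
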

\begin{proof}
    $(\Rightarrow)$ 
     Given that $B^\prime_1\cup C\cup B^\prime_2$ is connected, there exists a path $(v_0,\ldots,v_k)$ from $B^\prime_1$ to $B^\prime_2$ via $C$ since $B^\prime_1$ and $B^\prime_2$ are disjoint. It must be the case that $\hat y_{v_0} = y_{v_0}$ since $\{v_0, v_1\} \in E$ and $v_1\in C$. This implies that $(y\ominus C)_{v_0}\neq0$ and so $\exists B_1\in \mathcal S(\removal)$ with $v_0\in B_1$. The same argument can be applied to $v_k$ to prove $\exists B_2\in \mathcal S(\removal)$ with $v_k\in B_2$. Thus, the same path that connects the sets $B^\prime_1, B^\prime_2,$ and $A$ also connects  $B_1, B_2,$ and $A$.

    $(\Leftarrow)$ The converse holds by applying the same argument.
\end{proof}

\setcounter{cor}{0}
\begin{cor}
    A component $C\in\mathcal S_y(\hat y_\shortminus)$ is negatively critical with $\vert\mathcal S(y)\vert\neq \vert\mathcal S(\removalC)\vert$ if and only if there exists an $A\in\mathcal S(y)$ with $A\supseteq C$ such that either $(1)$ $A=C$ or $(2)$ $\exists B_1,B_2\in\mathcal S(\removal)$ with $B_1,B_2\subset A$ such that $B_1\cup C\cup B_2$ is connected.
\end{cor}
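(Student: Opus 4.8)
The plan is to derive the corollary by chaining three biconditionals: a ``global'' analogue of Theorem~\ref{thm:neg_critical_general}, then Lemma~\ref{lemma:path-to-union}, then Lemma~\ref{lemma:one-to-all}. The role of the extra hypothesis $\vert\mathcal S(y)\vert\neq\vert\mathcal S(\removalC)\vert$ is precisely to license replacing the neighborhood-local quantities $\mathcal S(y\cap N(C))$ and $\mathcal S((\removalC)\cap N(C))$ used in Theorem~\ref{thm:neg_critical_general} by their global counterparts $\mathcal S(y)$ and $\mathcal S(\removalC)$.

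First I would prove the global uniqueness statement analogous to Lemma~\ref{lemma:subset_identity}: since every vertex of $C$ carries the same nonzero $y$-label and $C$ is connected inside that segment, $C$ lies in a unique $A\in\mathcal S(y)$. Writing $\mathcal S(y)=\{A_1,\dots,A_N\}$ with $A=A_j\supseteq C$, I would then rerun the finite-additivity computation from the proof of Theorem~\ref{thm:neg_critical_general} — the one producing Equation~\eqref{eq:additive} via Lemma~\ref{lemma:mu_properties} — but on the whole graph $G$ instead of the neighborhood subgraph $G^\prime$. This gives $\vert\mathcal S(\removalC)\vert=\vert\mathcal S(y)\vert-1+\mu(G[A\setminus C])$, so the hypothesis $\vert\mathcal S(y)\vert\neq\vert\mathcal S(\removalC)\vert$ is equivalent to $\mu(G[A\setminus C])\neq1$. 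The case $\mu(G[A\setminus C])=0$ forces $A\setminus C=\varnothing$, i.e. $A=C$ (Condition~1); the case $\mu(G[A\setminus C])\geq2$ produces $v_0,v_k\in A\setminus C$ in distinct components of $G[A\setminus C]$, i.e. with no path $(v_0,\dots,v_k)\subseteq A\setminus C$. Pushing the implications backwards through the same identity shows that each of these conditions in turn forces $\vert\mathcal S(\removalC)\vert<\vert\mathcal S(y)\vert$ or $\vert\mathcal S(\removalC)\vert>\vert\mathcal S(y)\vert$. Combined with Theorem~\ref{thm:neg_critical_general} / Lemma~\ref{lemma:neg_critical_special}, which already equate the corresponding $N(C)$-local statement with ``$C$ is negatively critical,'' this yields the intermediate characterization: $C$ is negatively critical with $\vert\mathcal S(y)\vert\neq\vert\mathcal S(\removalC)\vert$ iff there is an $A\in\mathcal S(y)$, $A\supseteq C$, with $A=C$ or with a pair $v_0,v_k\in A\setminus C$ admitting no path in $A\setminus C$.

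It then remains only to reexpress Condition~2. Lemma~\ref{lemma:path-to-union} converts ``$\exists\, v_0,v_k\in A$ with no path $(v_0,\dots,v_k)\subseteq A\setminus C$'' into ``$\exists\, B^\prime_1,B^\prime_2\in\mathcal S(\removalC)$ with $B^\prime_1,B^\prime_2\subset A$ and $B^\prime_1\cup C\cup B^\prime_2$ connected,'' and Lemma~\ref{lemma:one-to-all} then replaces $\mathcal S(\removalC)$ by $\mathcal S(\removal)$, i.e. passes from deleting just $C$ to deleting the whole false negative mask. Substituting these two biconditionals into the intermediate characterization gives exactly the statement of the corollary.

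I expect the main obstacle to be the local-to-global bridge in the second paragraph: verifying that the additivity identity of Theorem~\ref{thm:neg_critical_general} transfers verbatim from $G^\prime$ to $G$ (this relies on the uniqueness of the enclosing component $A$ and on the fact that removing $C$ perturbs connectivity only inside $A$), and, more delicately, confirming that under the hypothesis $\vert\mathcal S(y)\vert\neq\vert\mathcal S(\removalC)\vert$ a change in the global component count is genuinely equivalent to the local change that \emph{defines} ``negatively critical'' — one must check that the split of $A$ witnessed by $v_0,v_k$ is also seen inside $N(C)$, using that the endpoints of any $C$-avoiding path lie in $D(C)=N(C)\setminus C$. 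Once that bridge is established, the remaining substitutions via Lemmas~\ref{lemma:path-to-union} and~\ref{lemma:one-to-all} are routine bookkeeping.
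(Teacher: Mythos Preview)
Your proposal is correct and is essentially the paper's own route: the paper packages your ``global analogue of Theorem~\ref{thm:neg_critical_general}'' (your second paragraph) as Lemma~\ref{lemma:neg_critical_special}, and then simply writes ``apply Lemmas~\ref{lemma:neg_critical_special}--\ref{lemma:one-to-all},'' which is exactly your chain Lemma~\ref{lemma:neg_critical_special} $\to$ Lemma~\ref{lemma:path-to-union} $\to$ Lemma~\ref{lemma:one-to-all}. You are, if anything, more careful than the paper about the local-to-global bridge---the paper's proof of Lemma~\ref{lemma:neg_critical_special} asserts ``$\vert\mathcal S(y)\vert\neq\vert\mathcal S(\removalC)\vert$ \ldots\ implies that $C$ is negatively critical'' without the $D(C)$-endpoint argument you sketch.
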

\begin{proof}
    This result follows immediately by applying Lemmas \ref{lemma:neg_critical_special} - \ref{lemma:one-to-all}.
\end{proof}

\begin{cor}
    A component $C\in\mathcal S_y(\hat y_+)$ is positively critical with $\vert\mathcal S(\hat y)\vert\neq \vert\mathcal S(\additionC)\vert$ if and only if there exists an $A\in\mathcal S(\hat y)$ with $A\supseteq C$ such that either $(1)$ $A=C$ or $(2)$ $\exists B_1,B_2\in\mathcal S(\addition)$ with $B_1,B_2\subset A$ such that $B_1\cup C\cup B_2$ is connected.
\end{cor}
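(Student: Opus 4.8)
The plan is to mirror, essentially line for line, the proof of the preceding corollary (Corollary~\ref{cor:neg_critical_special}), under the substitution that replaces the ground truth $y$ by the prediction $\hat y$, the false negative mask $\hat y_\shortminus$ by the false positive mask $\hat y_+$, and the removals $\removal,\removalC$ by $\addition,\additionC$, invoking Theorem~\ref{thm:pos_critical_general} wherever the negative argument invoked Theorem~\ref{thm:neg_critical_general}. Concretely, I would first record the false-positive analogues of Lemmas~\ref{lemma:subset_identity}, \ref{lemma:neg_critical_special}, \ref{lemma:path-to-union} and \ref{lemma:one-to-all}, and then chain them exactly as in the negative case.

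The four analogue statements are: (a) for every $C\in\mathcal S_y(\hat y_+)$ there is a unique $A\in\mathcal S(\hat y)$ with $C\subseteq A$ — the proof of Lemma~\ref{lemma:subset_identity} only uses that every vertex of $C$ is foreground in the labeling under consideration, which holds here because $C$ lies in the false positive mask, on which $\hat y\neq0$; (b) using (a) together with the additivity identity of Equation~\ref{eq:additive} (with $y$ replaced by $\hat y$) and Lemma~\ref{lemma:mu_properties}, the computation proving Theorem~\ref{thm:neg_critical_general} shows verbatim that $C$ is positively critical with $\vert\mathcal S(\hat y)\vert\neq\vert\mathcal S(\additionC)\vert$ if and only if some $A\in\mathcal S(\hat y)$ with $A\supseteq C$ satisfies $(1)$ $A=C$ or $(2)$ the path-obstruction condition inside $A\setminus C$; (c) the analogue of Lemma~\ref{lemma:path-to-union} converts condition $(2)$ into the existence of $B_1',B_2'\in\mathcal S(\additionC)$ with $B_1',B_2'\subset A$ and $B_1'\cup C\cup B_2'$ connected; (d) the analogue of Lemma~\ref{lemma:one-to-all} replaces $\mathcal S(\additionC)$ by $\mathcal S(\addition)$. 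In each case the original proof transcribes without change: an endpoint of a path forced through $C$ has a neighbour in $C$, is therefore itself absent from the false positive mask, hence stays foreground after either $\ominus C$ or the full removal $\hat y\ominus\hat y_+$, and the path then witnesses the required connectivity, the two components being distinct because the endpoints are disconnected in the relevant induced subgraph; the converses are immediate from disjointness of distinct components. Composing (b)--(d) is precisely the composition of Lemmas~\ref{lemma:neg_critical_special}--\ref{lemma:one-to-all} used for the negative corollary (the $A=C$ alternative passing through untouched), and yields the statement.

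The step that actually needs checking — and the natural place for a mechanical transcription to break — is the claim used throughout (c) and (d) that a vertex adjacent to $C$ but lying outside $C$ is genuinely \emph{not} in the false positive mask. This is where one must use that the components of $\mathcal S_y(\hat y_+)$ are the maximal connected subsets of the false positive mask on which the prediction label stays constant along a path within a single segment: any same-label masked vertex adjacent to $C$ would already have been absorbed into $C$, so a vertex outside $C$ that neighbours $C$ is unmasked and hence survives both removal operations intact. Once this observation is in place, everything else is routine bookkeeping inherited from the negative-case lemmas.
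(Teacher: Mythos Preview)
Your proposal is correct and follows essentially the same approach as the paper: the paper's proof is literally the one-line substitution ``let $z=\hat y$ and $\hat z_\shortminus=\hat y_+$, then apply Corollary~\ref{cor:neg_critical_special},'' which is exactly the duality you unpack in detail by re-deriving the positive analogues of Lemmas~\ref{lemma:subset_identity}--\ref{lemma:one-to-all}. One small terminological slip: in your final paragraph the constancy condition defining $\mathcal S_y(\hat y_+)$ is on the \emph{ground truth} label $y$, not the prediction label, but since $y\equiv 0$ on the false positive mask this is automatic and your conclusion that any masked neighbour of $C$ is absorbed into $C$ still goes through.
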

\begin{proof}
    Let $z=\hat y$ and $\hat z_\shortminus=\hat y_+$, then the result follows immediately by applying Corollary \ref{cor:neg_critical_special}.
\end{proof}

\begin{figure}[htbp!]
    \centering
    \includegraphics[width=0.9\columnwidth]{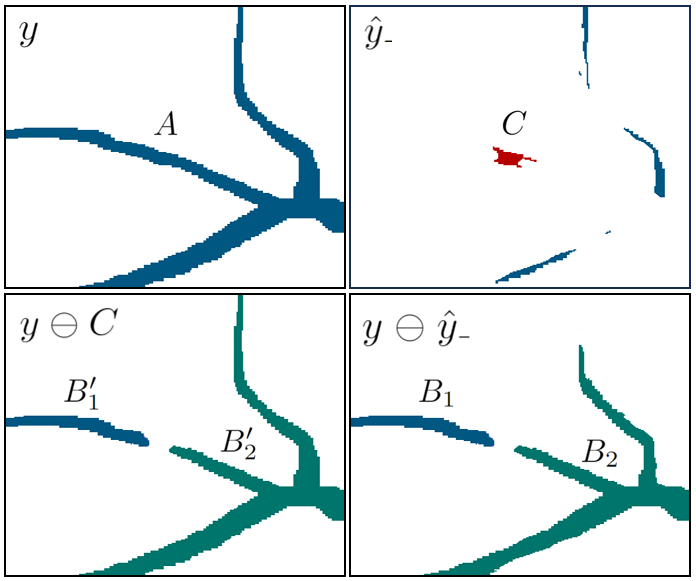}
    \caption{Visualization of Corollary \ref{cor:neg_critical_special}.
    }
\end{figure}

\subsection{Computation}\label{app:compute}

\setcounter{cor}{2}
\begin{cor}
    A component $C\in\mathcal S_y(\hat y_\shortminus)$ is negatively critical with $\vert\mathcal S(y)\vert\neq \vert\mathcal S(\removalC)\vert$ if and only if $\exists A\in\mathcal S(y)$ with $A\supseteq C$ such that either $(1)$ $\nexists\, i\in D(C)$ with $i\in A$ or $(2)$ $\exists B_1, B_2\in\mathcal S(\removal)$ with $B_1,B_2\subset A$ such that $i\in B_1$ and $j\in B_2$ for some $i,j \in D(C)$.
\end{cor}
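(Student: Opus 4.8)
The plan is to deduce this corollary directly from Corollary~\ref{cor:neg_critical_special} by showing that, for the unique component $A\in\mathcal S(y)$ with $C\subseteq A$, Condition~(1) of the present statement is equivalent to ``$A=C$'' and Condition~(2) is equivalent to ``$\exists\,B_1,B_2\in\mathcal S(\removal)$ with $B_1,B_2\subset A$ such that $B_1\cup C\cup B_2$ is connected.'' First I would record that such an $A$ is unique --- the argument of Lemma~\ref{lemma:subset_identity} applies verbatim with $\mathcal S(y)$ in place of $\mathcal S(y\cap N(C))$ --- so that in both statements the existential quantifier over $A$ refers to the same fixed component, and it suffices to match the conditions for that $A$.

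For Condition~(1) I would argue both directions by hand. If $A=C$ then $D(C)\cap A=(N(C)\setminus C)\cap C=\varnothing$, giving the ``no $i\in D(C)$ with $i\in A$'' clause. For the converse I would take $v\in A\setminus C$ (assuming $A\neq C$), use that $A$ is connected and contains a vertex of $C$ to get a path inside $A$ from $v$ into $C$, and let $w$ be the last vertex of that path lying outside $C$: then $w\in A$, $w\notin C$, and $w$ is adjacent to a vertex of $C$, so $w\in N(C)\setminus C=D(C)$, contradicting Condition~(1).

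For Condition~(2) the crux is the following observation: if $B_1,B_2\in\mathcal S(\removal)$ with $B_1,B_2\subset A$, then both carry the common nonzero label of $A$ in $\removal$ (their vertices are not in the false-negative mask), hence $B_1$ and $B_2$ are non-adjacent; and $C\cap F(\removal)=\varnothing$ because $C\subseteq F(\hat y_\shortminus)$. Consequently any path inside $B_1\cup C\cup B_2$ from $B_1$ to $B_2$ must pass through $C$, so $B_1\cup C\cup B_2$ is connected iff $C$ is adjacent to each of $B_1,B_2$; and an edge from $C$ to $B_\ell$ has its $B_\ell$-endpoint in $N(C)\setminus C=D(C)$. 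Reading this chain of equivalences in both directions identifies Condition~(2) of Corollary~\ref{cor:neg_critical_special} with Condition~(2) here, and combining the two equivalences with Corollary~\ref{cor:neg_critical_special} finishes the proof.

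I expect the only delicate point to be the converse of the Condition~(1) equivalence: I must use the correct notion of connectivity (paths within a single segment) so that the chosen path from $v$ to $C$ stays entirely inside $A$, ensuring its last vertex outside $C$ is simultaneously in $A$ and in $D(C)$. Everything else is bookkeeping with the definitions of $\removal$, $N(C)$, and $D(C)$, together with the non-adjacency of distinct $\mathcal S(\removal)$-components that live in a common $y$-component.
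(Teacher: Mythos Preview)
Your proposal is correct and follows essentially the same approach as the paper: both proofs reduce to Corollary~\ref{cor:neg_critical_special} by separately matching Condition~(1) with ``$A=C$'' (via connectedness of $A$ and the definition of $D(C)$) and Condition~(2) with ``$B_1\cup C\cup B_2$ connected'' (via the observation that $B_1,B_2$ are non-adjacent so any connecting path must traverse $C$ and hence meet $D(C)$). Your write-up is somewhat more explicit---recording the uniqueness of $A$ and the reason $B_1,B_2$ share a label in $\removal$---but the argument is structurally identical to the paper's.
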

\begin{proof}
    First, we prove that Condition 1 is equivalent to Condition 1 in Corollary \ref{cor:neg_critical_special}. For the forward direction, Lemma \ref{lemma:subset_identity} implies that $C\subseteq A$. Now choose any $i\in A$, then it must be the case that $i\in C$ since $A$ is connected and $\nexists j\in D(C)$ with $y_i=y_j$. The converse is trivial since $A=C$ implies that
    \begin{equation*}
        D(C)=N(C)\setminus C= N(A)\setminus A
    \end{equation*}
    and so $\nexists i \in D(C)$ with $i\in A$ since the set $A$ is entirely removed from $D(c)$.

    Next, we prove that Condition 2 is equivalent to Condition 2 in Corollary \ref{cor:neg_critical_special}. 
    Let $v_0=i$ and $v_k=j$, there exists a path connecting these vertices contained in $C$ since these nodes are connected to the boundary of $C$ which is a connected set. For the converse, $B_1\cup C\cup B_2$ being connected but $B_1\cup B_2$ being disconnected implies that there exists a path $(v_0,\ldots, v_k)$ from $B_1$ to $B_2$ via $C$. Since this path passes through $C$, it must be the case that $\exists v_i,v_j\in D(C)$ such that $v_i\in B_1$ and $v_j\in B_2$.     
\end{proof}

\begin{thm}
    The computational complexity of computing critical components is $\mathcal O(n)$ with respect to the number of voxels in the image.
\end{thm}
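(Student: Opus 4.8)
The plan is to exhibit an algorithm (the one sketched in Algorithms~1--2) that computes all negatively critical components satisfying $\vert\mathcal S(y)\vert\neq\vert\mathcal S(\removalC)\vert$ in $\mathcal O(n)$ time, and then note that the positively critical case with $\vert\mathcal S(\hat y)\vert\neq\vert\mathcal S(\additionC)\vert$ reduces to it verbatim by taking $z=\hat y$, $\hat z_\shortminus=\hat y_+$, exactly as in the proof of Corollary~\ref{cor:pos_critical_special}. So it suffices to bound the cost of the negatively critical computation.

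First I would record the structural fact that makes everything linear: since edges are fixed by a $k$-connectivity constraint with $k\le 26$, every vertex has degree $\mathcal O(1)$, hence $\vert E\vert=\mathcal O(\vert V\vert)=\mathcal O(n)$. Consequently a single connected-components labeling of a subgraph of $G$ — by BFS or union--find — costs $\mathcal O(n)$. I would then perform exactly three linear passes. (a) Precompute $\mathcal S(y)$, storing in an array the component id of each vertex. (b) Precompute $\mathcal S(\removal)$, likewise. (c) Run one BFS over $F(\hat y_\shortminus)$ that enumerates the components $C\in\mathcal S_y(\hat y_\shortminus)$ using criteria (i)--(iii), and for the current component rooted at $r$ maintains a hash table keyed by $\mathcal S(y)$-labels with values in $\mathcal S(\removal)$-labels. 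The point of Corollary~\ref{cor:bfs} is precisely that the a~priori quadratic test ``recompute $\mathcal S(\cdot\cap N(C))$ for every $C$'' is replaced by consulting the two precomputed arrays only at the boundary nodes $D(C)$: when the BFS relaxes an edge $\{u,v\}$ with $u\in C$ and $v\notin F(\hat y_\shortminus)$, the vertex $v\in D(C)$ is inspected; if $v$ does not share the root's $y$-label it is ignored, if it does and its $\mathcal S(y)$-id is a new key we insert $(\,\mathcal S(y)\text{-id},\,\mathcal S(\removal)\text{-id}\,)$, and if that key already maps to a different $\mathcal S(\removal)$-id we flag Condition~2; if after the component is fully explored the root's $\mathcal S(y)$-id was never inserted, Condition~1 holds. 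Correctness of this bookkeeping is exactly the statement of Corollary~\ref{cor:bfs}.

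For the running-time bound I would argue amortization. Each vertex of $F(\hat y_\shortminus)$ is dequeued once and its $\mathcal O(1)$ incident edges examined $\mathcal O(1)$ times; each boundary vertex $v\in D(C)$ is examined once per edge from $v$ into $C$, so across \emph{all} components $v$ is touched $\mathcal O(\deg v)=\mathcal O(1)$ times. Every hash-table insert/lookup is $\mathcal O(1)$ amortized; a component of size $s$ produces $\mathcal O(s)$ table entries, and before moving to the next component the table is cleared in $\mathcal O(s)$ time (or, equivalently, entries are time-stamped so stale ones are ignored for free). Summing over components telescopes to $\mathcal O(\vert V\vert+\vert E\vert)=\mathcal O(n)$, and adding the two precomputations keeps the total at $\mathcal O(n)$. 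I expect the only delicate point is this amortization: one must make sure that the many small components do not, in aggregate, force more than linear work through shared boundary nodes or through repeated construction/teardown of the hash table. Bounded vertex degree handles the former, and clearing (rather than globally rebuilding) the table per component handles the latter; this is the step I would write out most carefully.
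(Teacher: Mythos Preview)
Your proposal is correct and follows essentially the same route as the paper: precompute the relevant connected-component labelings in linear time, then run a single BFS over the false-negative (resp.\ false-positive) mask that, via the characterization in Corollary~\ref{cor:bfs}, checks criticality with $\mathcal O(1)$ work per edge relaxation. Your write-up is in fact more careful than the paper's own proof---you make explicit the bounded-degree observation and the amortization over boundary nodes and hash-table teardown, which the paper leaves implicit by simply asserting that the inner-loop operations are constant time.
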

\begin{proof}
    This algorithm involves first precomputing the false negative and false positive masks which can be computed in linear time by comparing each entry in the prediction and ground truth. Next, we must also precompute the following sets of connected components: $\mathcal S(y)$, $\mathcal S(\hat y)$, $\mathcal S(\removal)$, and $\mathcal S(\addition)$.
    Since connected components can be computed in linear time, these precomputations can also be done in linear time~(Cormen et al. 2009). 
    
    Next, a BFS is performed over both $\hat y_\shortminus$ and $\hat y_+$ to extract the connected components of the false negative and/or positive masks. During this BFS, we can determine whether a component satisfies Corollary \ref{cor:bfs} in lines 14-21 in Algorithm 2. A BFS is a linear time algorithm~(Cormen et al. 2009). Since the operations in lines 13-18 can be achieved in constant time, the complexity of this BFS is still linear.
\end{proof}

\begin{algorithm*}
\caption{Detection of Critical Components}
\begin{algorithmic}[1]
\label{alg:critical_components}
    \Procedure{detect\blank{2mm}criticals}{$y,\; \hat y$}:
        \State $\hat y_\shortminus\leftarrow$ compute false negatives
        \State $\mathcal S(y)\leftarrow$ compute connected components
        \State $\mathcal S(\removal)\leftarrow$ compute connected components
        \State $\mathcal N(\hat y_\shortminus) = $ get\blank{2mm}critical($y,\; \hat y_\shortminus,\; \mathcal S(y),\; \mathcal S(\removal)$)
        \State
        \State $\hat y_+ \leftarrow$ compute false positives
        \State $\mathcal S(\hat y) \leftarrow$ compute connected components
        \State $\mathcal S(\addition) \leftarrow$ compute connected components
        \State $\mathcal P(\hat y_+) = $ get\blank{2mm}critical($\hat y,\; \hat y_+,\; \mathcal S(\hat y),\; \mathcal S(\addition)$)
        \State \textbf{return} $\mathcal N(\hat y_\shortminus),\; \mathcal P(\hat y_+)$
    \EndProcedure
    \State \State {\color{officegreen}\# Note that $\hat y_{\times}$ is a placeholder for $\hat y_\shortminus$ and $\hat y_+$}
    \Procedure{get\blank{2mm}critical}{$y,\; \hat y_\times,\; \mathcal S(y),\; \mathcal S(y\ominus \hat y_\times)$}
    \State $F(\hat y_\times) \leftarrow$ compute foreground
    \State $\mathcal X(\hat y_\times)= \text{set}()$
    \State\textbf{while} $\vert F(\hat y_\times)\vert > 0:$
        \State\hspace{5mm} $r =$ sample($F(\hat y_\times)$)
        \State\hspace{5mm} $C,\; is\blank{2mm}critical=$  get\blank{2mm}component($y,\; \hat y_\times,\; \mathcal S(y),\; \mathcal S(y\ominus \hat y_\times),\; r$)
        \State\hspace{5mm} $F(\hat y_\times$).remove($C$)
        \State\hspace{5mm} \textbf{if} $is\blank{2mm}critical:$
        \State\hspace{10mm} $\mathcal X(\hat y_\times)$.add($C$)
    \State \textbf{return} $\mathcal X(\hat y_\times)$
    \EndProcedure
\end{algorithmic}
\end{algorithm*}

\begin{algorithm*}
\caption{Check if Component is Critical}
\label{alg:check_component}
\begin{algorithmic}[1]
    \Procedure{get\blank{2mm}component}{$y,\; \hat y_\times,\; \mathcal S(y),\; \mathcal S(y\ominus \hat y_\times),\; r$\,}:
    \State $C = \text{set}()$
    \State $collisions = \text{dict}()$
    \State $is\blank{2mm}critical = $ False 
    \State $queue = [r]$
    \State\textbf{while} $\vert queue\vert > 0:$

    \State\hspace{5mm} $i= queue.\text{pop}()$
    \State\hspace{5mm} $C.\text{add}(i)$

    \State\hspace{5mm} \textbf{for} $j$ in $N(i)$:
    
    \State\hspace{10mm} \textbf{if } $y_j==y_r$:
    \State\hspace{15mm} \textbf{if } $(\hat y_\times)_j==1$:
    \State\hspace{20mm} $queue.\text{push}(j)$

    \State\hspace{15mm} \textbf{else}:
    \State\hspace{20mm} $\ell_j=$ get\blank{2mm}label($\mathcal S(y),\, j$)
    
    \State\hspace{20mm} \textbf{if } $\ell_j$ not in $collisions.\text{keys}()$:
    \State\hspace{25mm} $collisions[\ell_j]=$ get\blank{2mm}label($\mathcal S(y\ominus \hat y_\times),\, j$)

    \State\hspace{20mm} \textbf{elif } $collisions[\ell_j]\mathrel{\mathtt{!=}}$ get\blank{2mm}label($\mathcal S(y\ominus \hat y_\times),\, j$):
    \State\hspace{25mm} $is\blank{2mm}critical$ = True
    \State \textbf{if } $y_r$ not in $collisions.\text{keys}()$ :
    \State\hspace{5mm} $is\blank{2mm}critical=$  True
    \State \textbf{return} $C,\; is\blank{2mm}critical$
    \EndProcedure
\end{algorithmic}
\end{algorithm*}

\section{Extension to Affinity Models}\label{app:affinity}

An affinity model is a graph-based segmentation method where the main objective is to predict whether neighboring nodes belong to the same segment. Given a graph $G=(V,E)$ and ground truth segmentation $y$, let $\delta:E\rightarrow \{0,1\}$ be the affinity function given by
\begin{equation*}
    \delta(\{i,j\})=\begin{cases}
        1, & \text{if } i,j\in A \\
        0, & \text{otherwise}
    \end{cases}
\end{equation*}
for some $A\in\mathcal S(y)$. One key advantage of affinity models is that they allow instance segmentation to be equivalently formulated as a binary classification task. This formulation is particularly useful when the number of segments is unknown and when distinct objects may be in close contact or even touch.

Neural networks have been used to learn affinities for image segmentation tasks~\citep{turaga2010}. In this approach, the model learns affinity channels, where each channel represents the connectivity between voxels along a certain direction (e.g. vertical or horizontal). The loss function is then computed as a sum over individual losses, each corresponding to one of these affinity channels.

\begin{defn}\label{def:topoloss_edges}
Let $\mathcal L_k:\mathbb R^{nk}\times\mathbb R^{nk}\rightarrow\mathbb R$ be the loss function for an affinity model with $k$ channels be given by
\begin{equation*}
     \mathcal L_k(y,\hat y)=\sum_{i=1}^k\mathcal L(y^{(i)},\hat y^{(i)})
\end{equation*}
where $y^{(i)}, \hat y^{(i)}$ are binary affinities corresponding to the $i$-th channel and $\mathcal L$ is the loss function from Definition~\ref{def:topoloss_voxels}.
\end{defn}

Affinity models involve a transformation between voxel- and edge-based representations of an image. Importantly, critical components are defined within the context of a voxel-based representation. Therefore, each affinity prediction must first be transformed into a voxel-based representation, where the critical components are computed. Afterward, these critical components must be converted back into the affinity-based representation to ensure that topological information is properly incorporated into the segmentation process.

\section{Experiments}

\subsection{Training Protocol}\label{app:training_protocol}

In experiments involving the EXASPIM dataset, we trained all models using binary cross-entropy to learn three affinity channels representing voxel connectivity along the vertical, horizontal, and depth axes. Each model was trained for a total of 2000 epochs with the learning rate $10^{-3}$ and batch size 8. For the topological loss functions, we used the first 300 epochs to train a U-Net model, then fine-tuned the model for the last 1700 epochs.

The affinity predictions were then processed using a watershed algorithm to agglomerate a 3-d over-segmentation derived from the predicted affinities. This segmentation training and inference pipeline was implemented using the following Github repositories: \url{https://github.com/jgornet/NeuroTorch} and \url{https://github.com/funkey/waterz}.

\subsection{Voxel-Based Metrics}\label{app:pixel-based-metrics}

\hspace{8mm}\textbf{Accuracy}: Fraction of correctly labeled voxels.

\hspace{5mm}\textbf{Dice}: Metric that combines precision and recall into a single value to provide a balanced measure of a model's performance.

\hspace{5mm}\textbf{Adjusted Rand Index (ARI)}:
Measures the similarity between two clusterings by comparing the agreement and disagreement in pairwise assignments. In this version of the Rand index, a value of zero is excluded, ensuring a more refined comparison of clustering performance~(Rand 1971).

\hspace{5mm}\textbf{Variation of Information (VOI)}:
 Distance measure between two clusterings that quantifies the amount of information lost or gained when transitioning between clusterings~(Meila 2003).

\subsection{Skeleton-Based Metrics}

Let $G_i=(V_i, E_i)$ be an undirected graph representing the skeleton of the $i$-th object in a ground truth segmentation $y$. Let $\{S_1,\ldots, S_n\}$ denote a collection of such skeletons such that there is a bijection between skeletons and objects in the ground truth segmentation. Each node $u\in V_i$ has a 3-d coordinate $\varphi(u)$ that represents some voxel in $y$. Let $\hat y$ be the predicted segmentation to be evaluated. Finally, let $\hat y[\varphi(u)]$ be the label of voxel $u$ in the predicted segmentation.

The core idea behind skeleton-based metrics is to evaluate the accuracy of a predicted segmentation by comparing it to a set of ground truth skeletons. A key challenge in this process is to ensure that these metrics are robust to minor misalignments between the segmentation and skeletons (e.g., see Node 4 in Figure~\ref{fig:splits-examples}). To address this, a preprocessing step performs a depth-first search to identify pairs of vertices $u,v\in V_i$ that satisfy the following conditions:
\begin{quote}
    \begin{itemize}
        \item[(i)] $\hat y[\varphi(u)]=\hat y[\varphi(v)]$ and $\hat y[\varphi(u)]\neq0$
        \item[(ii)] There exists a path $\{u, w_1,\ldots, w_t, v\}\subseteq V_i$ such that $\hat y[\varphi(w_j)]=0$ for $j=1,\ldots, t$.
    \end{itemize}
\end{quote}
For any pair of vertices $u,v\in V_i$ that satisfy these conditions, we update the predicted segmentation along the detected path by setting $\hat y[\varphi(w_j)]:=\hat y[\varphi(u)]$ for $j=1,\ldots, t$. 

Let $G_i[U_i]$ be the subgraph induced by the vertex subset given by $U_i=\{u\in V_i : \hat y[\varphi(u)]\neq0 \}$. Using this definition, we define the following metric
\begin{equation*}
    \textbf{Splits / Neuron} = \sum\limits_{i=1}^n w_i\big(\big\vert \mathcal S(G[U_i])\big\vert -1\big),
\end{equation*}
where $w_j = \vert E_j\vert\; /\; \sum\limits_{i=1}^n\vert E_i\vert$. Intuitively, this metric computes a weighted average of the number of splits per skeleton.

An edge $\{u, v\}\in E_i$ is said to be \emph{omit} if either $\hat y[\varphi(u)]=0$ or $\hat y[\varphi(v)]=0$. Using this definition, the metric \textbf{\% Omit} is given by
\begin{equation*}
    \textbf{\% Omit} =100\cdot\frac{\sum\limits_{i=1}^n \big\vert\big\{\{u,v\}\in E_i : \{u,v\} \text{ is omit}\big\}\big\vert}{\sum\limits_{i=1}^n \vert E_i\vert}.
\end{equation*}
\begin{figure}[H]
    \centering
    \includegraphics[width=0.85\columnwidth]{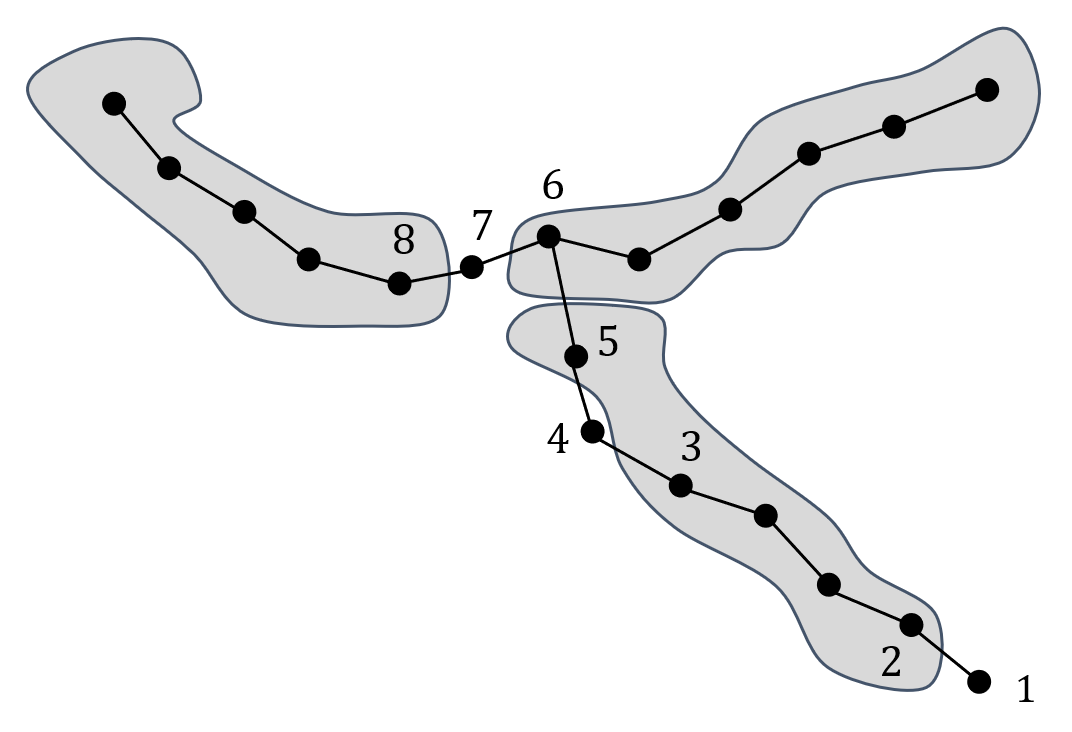}
    \caption{One ground truth skeleton and a predicted segmentation consisting of three connected components. Node 4 is slightly misaligned, its label would be updated to match Nodes 3 and 5. Edge $\{1,2\}$ is omit, but not counted in the metric \textbf{Splits / Neuron}. Edge $\{5,6\}$ is a split and there is a split between nodes 6 and 8. Edges $\{6,7\}$ and $\{7,8\}$ are also omit.}
    \label{fig:splits-examples}
\end{figure}

Given the collection of subgraphs $\{G_1[U_1], \ldots, G_n[U_n]\}$, two connected components $C_1\in\mathcal S(G_i[U_i])$ and $C_2\in\mathcal S( G_j[U_j])$ with $i\neq j$ are said to be \emph{merged} if there exists a pair of nodes $u\in C_1$ and $v\in C_2$ such that $\hat y[\varphi(u)]= \hat y[\varphi(v)]$. Let $M_i[U_i]$ be the set of all such components within the subgraph $G[U_i]$. Using this definition, we define the following metric
\begin{equation*}
    \textbf{\% Merged} = 100\cdot\frac{\sum\limits_{i=1}^n\sum\limits_{ C\in M_i[U_i]} \vert C\vert}{\sum\limits_{i=1}^n \vert E_i\vert}
\end{equation*}

\begin{figure}[H]
    \centering
    \includegraphics[width=0.6\columnwidth]{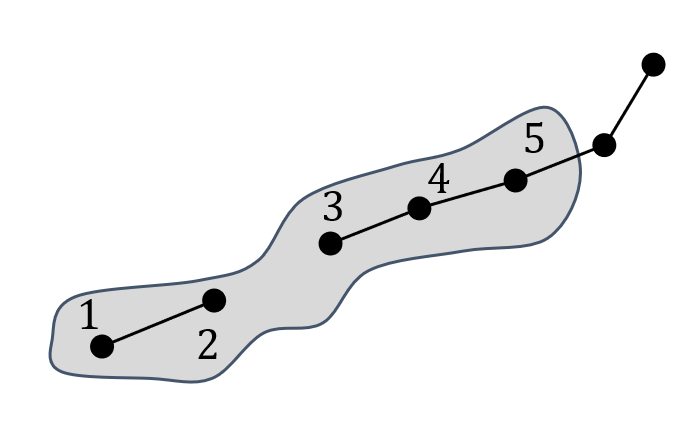}
    \caption{Two ground truth skeleton and a predicted segmentation consisting of one connected component. Edges $\{1,2\}$, $\{3,4\}$, and $\{4,5\}$ are merged.}
    \label{fig:merge-example}
\end{figure}

Using the previous two metrics, \textbf{Edge Accuracy} is defined as
\begin{equation*}
    \textbf{Edge Accuracy} = 100 - (\textbf{\% Omit} + \textbf{\% Merged})
\end{equation*}
Intuitively, edge accuracy represents the percentage of edges from ground truth skeleton that were correctly segmented.

Next, we define expected run length (ERL) metrics which quantify the expected length of a segment corresponding to some uniformly sampled skeleton node. Let $A_i = E_i\setminus(U_i\cup M[U_i])$ be the set of correctly segmented edges of the skeleton $G_i$. Using this definition, we define the following metrics:
\begin{align*}
    \textbf{ERL}_{G_i}&=\sum_{C\in\mathcal S(G_i[A_i])}\vert C\vert \cdot \frac{\vert C\vert}{\vert E_i\vert} \\
    \textbf{Normalized ERL}_{G_i} &= \frac{\textbf{ERL}_{G_i}}{\vert E_i\vert} \\
    \textbf{Normalized ERL}&= \sum_{i=1}^nw_i\cdot \textbf{Normalized ERL}_{G_i},
\end{align*}
where $w_j = \vert E_j\vert\; /\; \sum\limits_{i=1}^n\vert E_i\vert$.

Note that code is publicly available at \url{https://github.com/
AllenNeuralDynamics/segmentation-skeleton-metrics}.

\subsubsection{Additional Experimental Results}\label{app:additional_results}

Figures~\ref{fig:qualitative_results_2d_suppl} and \ref{fig:qualitative_results_3d_suppl} show the predicted segmentations for the full images, from which the image patches shown in Figures~\ref{fig:qualitative_results_2d_main} and \ref{fig:qualitative_results_3d_main} were sampled.

\begin{table*}[htbp!]
    \centering
    \resizebox{0.9\textwidth}{!}{%
    \begin{tabular}{lllllll}
    \toprule
    Method & Complexity & Accuracy $\uparrow$ & Dice $\uparrow$ & ARI $\uparrow$ & VOI $\downarrow$  & Betti Error $\downarrow$ \\
    \midrule
    \multicolumn{6}{c}{DRIVE} \\
    \midrule
    \multirow{1}{*}{U-Net}
     & $\mathcal O(n)$ & 0.945$\pm$0.006 & 0.749$\pm$0.003 & 0.834$\pm$0.041 & 1.98$\pm$0.05 & 3.64$\pm$0.54 \\
    \multirow{1}{*}{DIVE} 
     & $\mathcal O(n)$ & \textbf{0.955$\pm$0.002} & 0.754$\pm$0.001 & 0.841$\pm$0.026 & 1.94$\pm$0.13 & 3.28$\pm$0.64\\
     \multirow{1}{*}{Mosin.}
     & $\mathcal O(n)$ & 0.954$\pm$0.005 & 0.722$\pm$0.001 & 0.887$\pm$0.039 & 1.17$\pm$0.03 & 2.78$\pm$0.29\\
     \multirow{1}{*}{TopoLoss}
     & $\mathcal O(n\log n)$ & 0.952$\pm$0.004 & 0.762$\pm$0.004 & 0.902$\pm$0.011 & 1.08$\pm$0.01 & 1.08$\pm$0.27 \\
    \multirow{1}{*}{DMT} 
     & $\mathcal O(n^2)$ & 0.955$\pm$0.004 & 0.773$\pm$0.004 & 0.902$\pm$0.002 & 0.88$\pm$0.04 & \textbf{0.87$\pm$0.40} \\
    \multirow{1}{*}{\textbf{Ours}} 
     & $\mathcal O(n)$ & 0.953$\pm$0.002 & \textbf{0.809$\pm$0.012} & \textbf{0.943$\pm$0.002} & \textbf{0.48$\pm$0.01} & 0.94$\pm$0.27\\
     \midrule
     \multicolumn{6}{c}{ISBI12} \\
     \midrule
     \multirow{1}{*}{U-Net}
     & $\mathcal O(n)$ & 0.968$\pm$0.002 & 0.970$\pm$0.005 & 0.934$\pm$0.007 & 1.37$\pm$0.03 & 2.79$\pm$0.27 \\
     \multirow{1}{*}{DIVE}
     & $\mathcal O(n)$ & 0.964$\pm$0.004 & 0.971$\pm$0.003 & 0.943$\pm$0.009 & 1.24$\pm$0.03 & 3.19$\pm$0.31 \\
     \multirow{1}{*}{Mosin.}
     & $\mathcal O(n)$ & 0.953$\pm$0.006 & 0.972$\pm$0.002 & 0.931$\pm$0.005 & 0.98$\pm$0.04 & 1.24$\pm$0.25 \\
     \multirow{1}{*}{TopoLoss}
     & $\mathcal O(n\log n)$ & 0.963$\pm$0.004 & 0.976$\pm$0.004 & 0.944$\pm$0.008 & 0.78$\pm$0.02 & 0.43$\pm$0.10 \\
     \multirow{1}{*}{DMT}
     & $\mathcal O(n^2)$ & 0.959$\pm$0.004 & 0.980$\pm$0.003 & \textbf{0.953$\pm$0.005} & \textbf{0.67$\pm$0.03} & \textbf{0.39$\pm$0.11} \\
     \multirow{1}{*}{\textbf{Ours}}
     & $\mathcal O(n)$ & \textbf{0.971$\pm$0.002} & \textbf{0.983$\pm$0.001} & 0.934$\pm$0.001 & 0.74$\pm$0.03 & 0.48$\pm$0.02 \\
    \midrule
     \multicolumn{6}{c}{CrackTree} \\
     \midrule
     \multirow{1}{*}{U-Net}
     & $\mathcal O(n)$ & 0.982$\pm$0.010 & 0.649$\pm$0.003 & 0.875$\pm$0.042 & 1.63$\pm$0.10 & 1.79$\pm$0.30 \\
     \multirow{1}{*}{DIVE}
     & $\mathcal O(n)$ & 0.985$\pm$0.005 & 0.653$\pm$0.002 & 0.863$\pm$0.0376 & 1.57$\pm$0.08 & 1.58$\pm$0.29 \\
     \multirow{1}{*}{Mosin.}
     & $\mathcal O(n)$ & 0.983$\pm$0.007 & 0.653$\pm$0.001 & 0.890$\pm$0.020 & 1.11$\pm$0.06 & 1.05$\pm$0.21 \\
     \multirow{1}{*}{TopoLoss}
     & $\mathcal O(n\log n)$ & 0.983$\pm$0.008 & 0.673$\pm$0.004 & 0.929$\pm$0.012 & 0.99$\pm$0.01 & 0.67$\pm$0.18 \\
     \multirow{1}{*}{DMT}
     & $\mathcal O(n^2)$ & 0.984$\pm$0.004 & \textbf{0.681$\pm$0.005} & \textbf{0.931$\pm$0.017} & \textbf{0.90$\pm$0.08} & 0.52$\pm$0.19 \\
     \multirow{1}{*}{\textbf{Ours}}
     & $\mathcal O(n)$ & \textbf{0.986$\pm$0.001} & 0.667$\pm$0.010 & 0.914$\pm$0.011 & 0.98$\pm$0.10 & \textbf{0.51$\pm$0.06} \\
    \midrule
    \multicolumn{6}{c}{EXASPIM} \\
    \midrule
     \multirow{1}{*}{U-Net} 
     & $\mathcal O(n)$ & 0.997$\pm$0.010 & 0.751$\pm$0.047  & 0.875$\pm$0.082 & 1.28$\pm$0.46 & 0.74$\pm$0.03 \\
     \multirow{1}{*}{Gornet} 
     & $\mathcal O(n^2)$ & 0.994$\pm$0.001 & 0.777$\pm$0.083 & 0.901$\pm$0.049 & 0.65$\pm$0.17 & 0.42$\pm$0.07\\
     \multirow{1}{*}{clDice} 
     & $\mathcal O(kn)$ & \textbf{0.998$\pm$0.004} & 0.785$\pm$0.032 & 0.923$\pm$0.071 & 0.66$\pm$0.51 & 0.36$\pm$0.07 \\
     \multirow{1}{*}{MALIS} 
     & $\mathcal O(n^2)$ & 0.997$\pm$0.001 & \textbf{0.794$\pm$0.052} & 0.927$\pm$0.042 & 0.64$\pm$0.27 & 0.34$\pm$0.08\\
     \multirow{1}{*}{\textbf{Ours}} 
     & $\mathcal O(n)$ & 0.997$\pm$0.001 & 0.770$\pm$0.058  & \textbf{0.953$\pm$0.038} & \textbf{0.42$\pm$0.21} & \textbf{0.31$\pm$0.06} \\
     \bottomrule
    \end{tabular}
    }
    \caption{{\bf Quantitative results for different models on several datasets.} Results for Dive, Mosin., TopoLoss, and DMT are compiled from~\cite{hu2023learn}. $\mathcal O(\cdot)$: complexity of training iterations, $n$: number of pixels/voxels, $k$: number of pooling operations in clDice.}
    \label{table:full-common-metrics}
\end{table*}

\begin{table*}[htbp!]
    \centering
    \resizebox{110mm}{!}{
    \begin{tabular}{llll}
    \midrule
    Method & Complexity \hspace{0.5mm} & \multirow{2}{*}{\shortstack{Runtime$\slash$Epoch \\ $64\times64\times64$}}  & \multirow{2}{*}{\shortstack{Runtime$\slash$Epoch \\ $128\times128\times128$}}   \\
    \\
    \midrule
    \multirow{1}{*}{U-Net} & $\mathcal O(n)$ & 2.59$\pm$0.18 &  10.03$\pm$0.23 sec \\
    \multirow{1}{*}{Gornet} & $\mathcal O(n^2)$ & 6.88$\pm$0.70 & 71.62$\pm$1.83 sec \\
    \multirow{1}{*}{clDice} & $\mathcal O(kn)$ & 6.31$\pm$0.71 & 48.55$\pm$1.60 sec \\
    \multirow{1}{*}{MALIS} & $\mathcal O(n^2)$ & 5.03$\pm$0.47 & 50.68$\pm$1.58 sec \\
    \multirow{1}{*}{\textbf{Ours}} & $\mathcal O(n)$ & 7.17$\pm$0.35 & 20.12$\pm$1.15 sec \\
    \bottomrule
    \end{tabular}
    }
    \caption{Average runtimes per epoch for various image patch sizes.}
    \label{table:full-runtimes}
\end{table*}

\begin{figure*}[htbp!]
    \centering
    \includegraphics[width=0.9\textwidth]{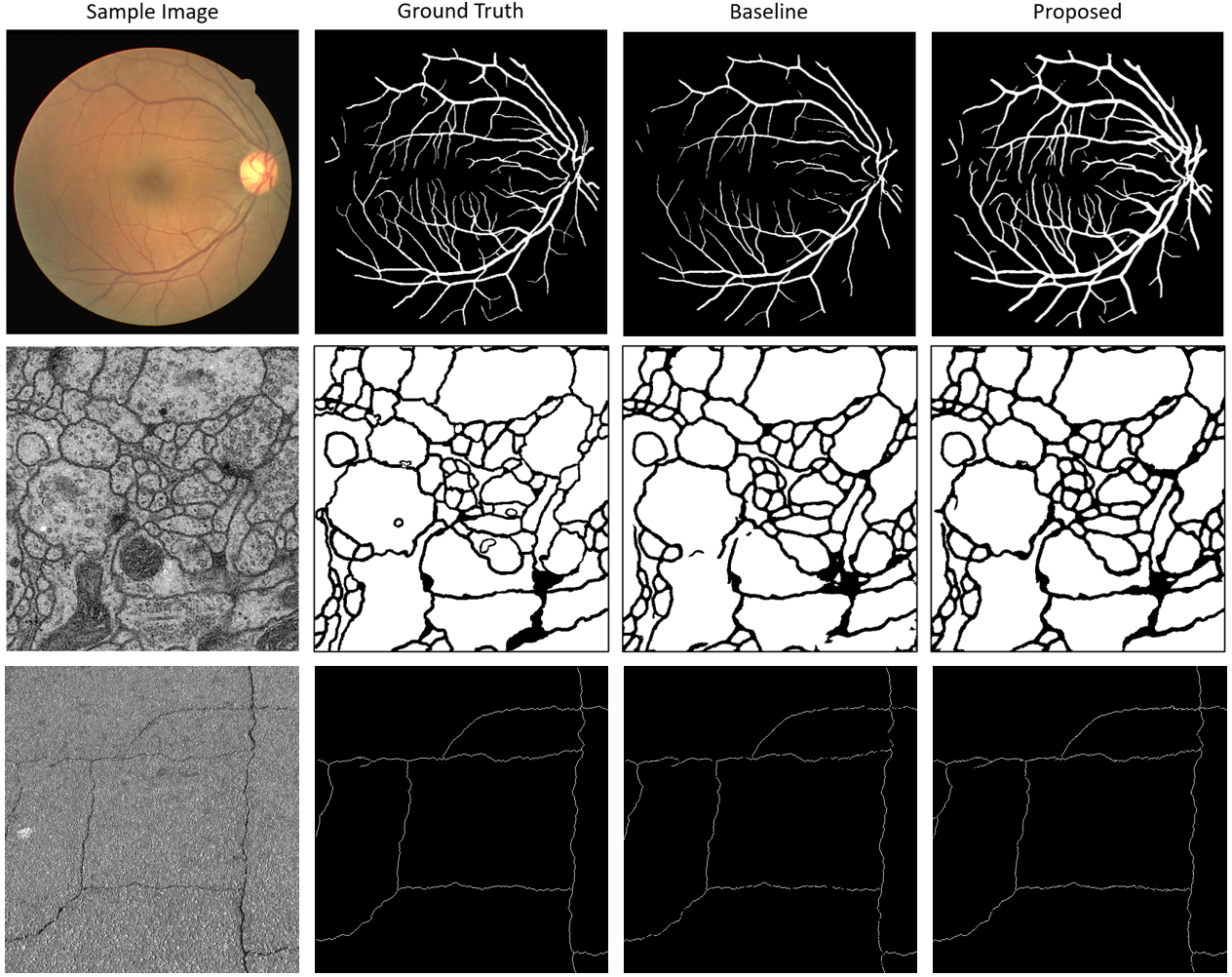}
    \caption{Qualitative results of different models on several 2-d segmentation datasets.}
    \label{fig:qualitative_results_2d_suppl}
\end{figure*}

\begin{figure*}[htbp!]
    \centering
    \includegraphics[width=0.9\textwidth]{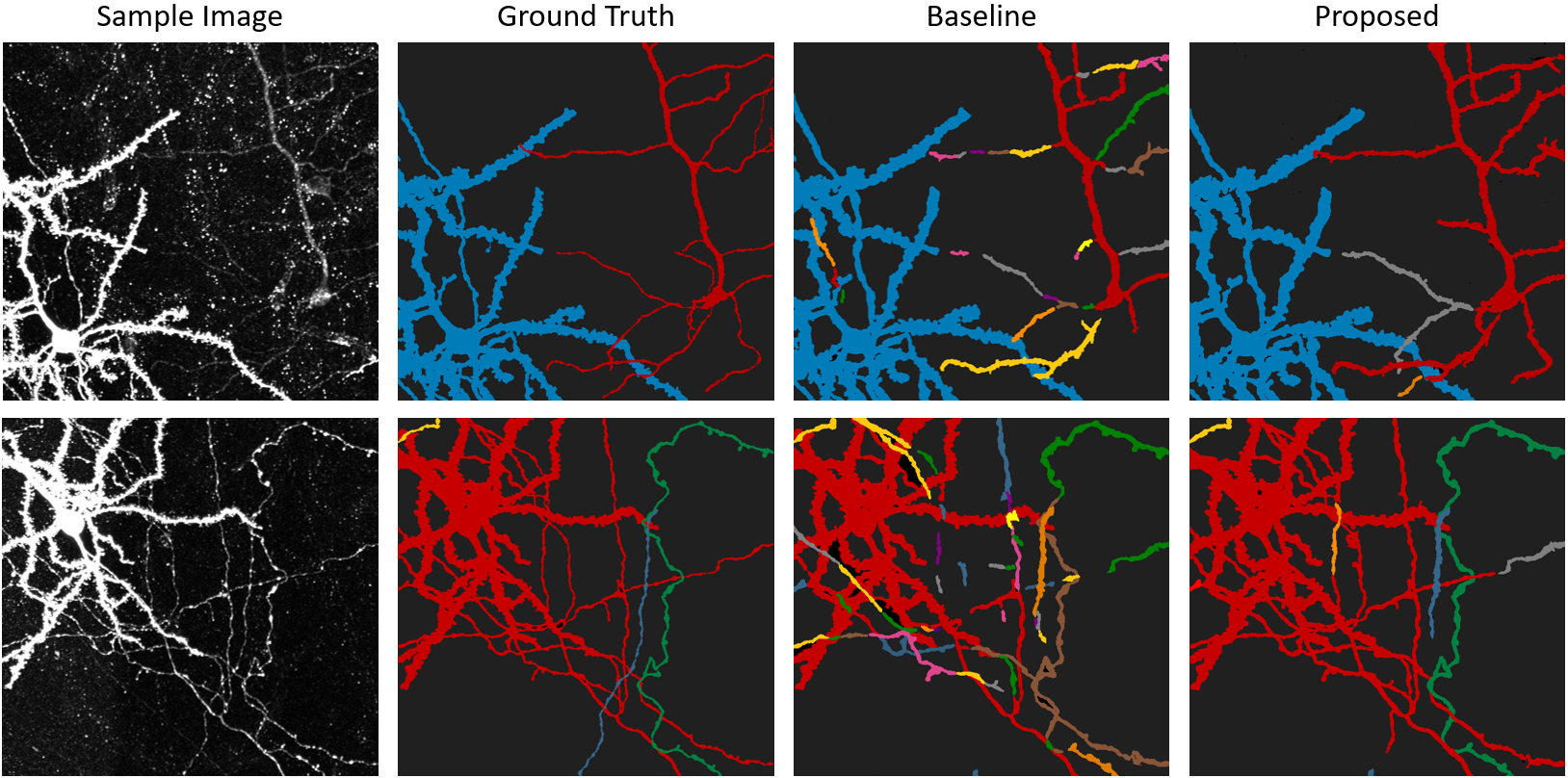}
    \caption{Qualitative results of different models on the 3-d EXASPIM dataset.}
    \label{fig:qualitative_results_3d_suppl}
\end{figure*}

\begin{figure*}[ht!]
    \centering
    \includegraphics[width=110mm]{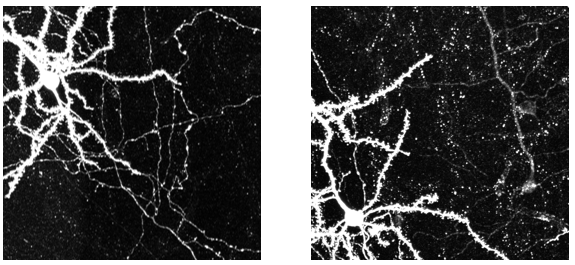}
    \caption{Raw image patches from the EXASPIM dataset that the results in Fig.~\ref{fig:qualitative_results_3d_main} were generated from.}
    \label{fig:raw_img}
\end{figure*}

\end{document}